\colorlet{myorange}{red!30!yellow}
\newcommand{\michel}[1]{\textcolor{red}{Michel: #1}}
\renewcommand{\michel}[1]{}
\newcommand{\overbar}[1]{\mkern 1.5mu\overline{\mkern-1.5mu#1\mkern-1.5mu}\mkern 1.5mu}
\newtheorem*{rep@theorem}{\rep@title}
\newcommand{\newreptheorem}[2]{%
	\newenvironment{rep#1}[1]{%
		\def\rep@title{#2 \ref{##1}}%
		\begin{rep@theorem}}%
		{\end{rep@theorem}}}
\newtheorem{prop}{Proposition}
\newtheorem{defn}{Definition}
\newtheorem{model}{Model}
\title{Counterfactuals uncover the modular \\structure of deep generative models}
\author{Michel Besserve$^{1,2}$, Arash Mehrjou$^{1,3}$, R\'emy Sun$^{1,4}$, Bernhard Sch\"olkopf$^1$\\
	1. MPI for Intelligent Systems, T\"ubingen, Germany.\\
	2. MPI for Biological Cybernetics, T\"ubingen, Germany.\\
	3. Dep. for Computer Science, ETH Z\"urich, Switzerland.\\
	4. ENS Rennes, France.
}
\begin{document}
\maketitle
\begin{abstract}
Deep generative models can emulate the perceptual properties of complex image datasets, providing a latent representation of the data. However, manipulating such representation to perform meaningful and controllable transformations in the data space remains challenging without some form of supervision. While previous work has focused on exploiting statistical independence to \textit{disentangle} latent factors, we argue that such requirement is too restrictive and propose instead a non-statistical framework that relies on counterfactual manipulations to uncover a modular structure of the network composed of disentangled groups of internal variables. Experiments with a variety of generative models trained on complex image datasets show the obtained modules can be used to design targeted interventions. This opens the way to applications such as computationally efficient style transfer and the automated assessment of robustness to contextual changes in pattern recognition systems.
\end{abstract}

\michel{TOP PRIORITIES: (1) good example with either BEGAN OR DFCVAE, including FID. (2)consistency of main and supplemental. (3) link theory and practice.
	ideas to emphasize: (1) manipulable generator can potentially generate more than the original data, extend the training distribution by removing confounders, (2) order of the experiments: first show that the network generates hybrids that are similar to originals: way to modify a select sample in a controllable way, then show it can be use to generalize beyond the learned dataset (3) defining disentangled transformations in complex dataset cannot be done explicitly, but we can mine for modules with defined effects, such that transformation can be defined as interventions on such modules, beyond basic transformation, semantically meaningful transformation that could allow probing the performance of discriminative models in challenging, unobserved situations, are difficult to design, we exploit the mechanistic structure of generative models for that.}

\begin{figure}[h]
\centering
\includegraphics[width=.9\textwidth]{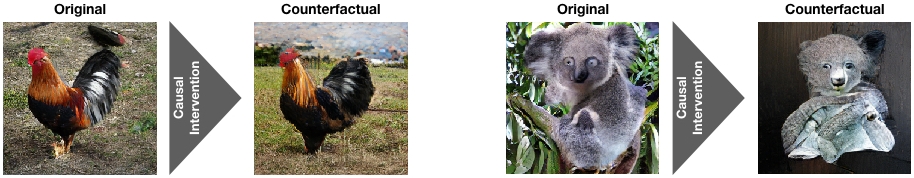}\caption{Counterfactual manipulation of samples of a BigGan trained on ImageNet (see section~\ref{sec:biggan}).\label{fig:preview}}
\end{figure}
\section{Introduction}
Deep generative models, by learning a non-linear function mapping a latent space to the space observations, have proven successful at designing realistic images in a variety of complex domains (objects, animals, human faces,
interior scenes). In particular, two kinds of approaches 
emerged as state-of-the-art (SOTA):
Generative Adversarial Networks (GAN) \citep{goodfellow2014generative}, and
Variational Autoencoders (VAE) \citep{kingma2013auto,rezende2014stochastic}.

Efforts have been made to have such models produce \textit{disentangled} latent representations that can control interpretable properties of images \citep{kulkarni2015deep,higgins2016beta}. However, the resulting models are not necessarily \textit{mechanistic} (or \emph{causal}) in the sense that interpretable properties of an image cannot be ascribed to a particular part, a \textit{module}, of the network architecture. Gaining access to a modular organization of generative models would benefit the \textit{interpretability} and allow \textit{extrapolations}, such as generating an object in a background that was not previously associated with this object, as illustrated in a preview of our experimental results in~Fig.~\ref{fig:preview}.

Such extrapolations are an integral part of human representational capabilities (consider common expressions such as "like an elephant in a china shop") and consistent with the modular organization of its visual system, comprising specialized regions encoding objects, faces and places (see e.g. \citet{grill2004human}). Extrapolations moreover likely support adaptability to environmental changes and robust decision making \citep{dvornik2018modeling}.
How to leverage trained deep generative architectures to perform such extrapolations is an open problem, largely due to the non-linearities and high dimensionality that prevent interpretability of computations performed in successive layers. 

In this paper, we propose a causal framework to explore modularity, which relates to the causal principle of \textit{Independent Mechanisms}\michel{clearer link to interventions?}, stating that the causal mechanisms contributing to the overall generating process do not influence nor inform each other \citep{causality_book}.\footnote{Note that this is not a \emph{statistical} independence; the quantities transformed by the mechanisms of course do influence each other and can be statistically dependent.} We study the effect of direct interventions in the network from the point of view that the mechanisms involved in generating data can be modified individually without affecting each other. This principle can be applied to generative models to assess how well they capture a causal mechanism \citep{besserve2018aistats}. Causality 
allows to assay how an outcome would have changed, had some variables taken different values, referred to as a {\emph counterfactual} \citep{pearl2009causality,imbens2015causal}. We use counterfactuals to assess the role of specific internal variables in the overall functioning of trained deep generative models, along with a rigorous definition of disentanglement in a causal framework.
Then, we analyze this disentanglement in implemented models based on unsupervised counterfactual manipulations. We show empirically how VAEs and GANs trained on image databases exhibit modularity of their hidden units, encoding different features and allowing counterfactual editing of generated images.

\textbf{Related work.}
Our work relates to the interpretability of convolutional neural networks, which has been intensively investigated in discriminative architectures \citep{zeiler2014visualizing,dosovitskiy2016inverting,fong2017interpretable,zhang2017examining,zhang2017growing}. 
Generative models require a different approach, as the downstream effect of changes in
intermediate representations are high dimensional. InfoGANs. $\beta$-VAEs and other works \citep{chen2016infogan,mathieu2016disentangling,kulkarni2015deep,higgins2016beta} address supervised or unsupervised disentanglement of latent
variables related to what we formalize as \textit{extrinsic
disentanglement} of transformations acting on data points. 
We introduce the novel concept of \textit{intrinsic disentanglement} 
to uncover the internal organization of networks, arguing that many interesting transformations are statistically dependent and are thus unlikely to be disentangled in the latent space. 
 This relates to \citet{bau2018gan} who proposed a framework based on interventions on internal variables of a GAN which, in contrast to our fully unsupervised approach, 
requires semantic information. \citet{higgins2018towards} suggest a definition of disentanglement based on group representation theory. Compared to this proposal, our approach (introduced independently in \citep{anon18}) is more flexible as it applies to arbitrary continuous transformations, free from the strong requirements of representation theory (see Appendix F). Finally, an interventional approach to disentanglement has also be taken by  
\citet{suter2018robustly}, who focuses on extrinsic disentanglement in a classical graphical model setting and develop measures of interventional robustness based on labeled data.

\section{From disentanglement to counterfactuals and back}\label{sec:background}
We introduce a general framework to formulate precisely the notion of disentanglement and bridge it to causal concepts. This theory section will be presented informally to ease high level understanding. Readers interested in the mathematical aspects can refer to Appendix A where we provide all details.
\begin{figure}
	\newcommand{\CGMscale}{.7}
	\begin{subfigure}{.24\textwidth}
		\includegraphics[width=\textwidth]{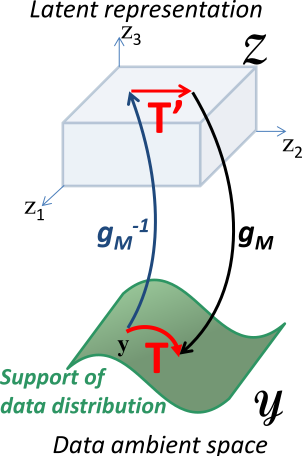}
		\subcaption{\label{fig:commDiagManifold}}
	\end{subfigure}
	\hfill	
	\begin{subfigure}{.26\textwidth}
		\begin{tikzpicture}
	\definecolor{darkgreen}{HTML}{007700}
		\GraphInit[vstyle=Normal]
		\SetGraphUnit{2}
		\tikzset{EdgeStyle/.style = {->}}
		\tikzset{VertexStyle/.append style = {minimum size = 1.2cm}}
		
		\begin{scope}[scale=\CGMscale,shift={(0,0)},every node/.style={scale=\CGMscale}]
		\Vertex[Math,L=z_1]{Z1}
		\EA [Math,L=z_2](Z1){Z2}
		\renewcommand*{\VertexLineColor}{lightgray}
		\tikzset{VertexStyle/.append style = {scale=.5}}
		\SO [unit=1.2,Math,L=\,](Z1){U2}
		\EA [unit=1,Math,L=\,](U2){U3}
		\EA [unit=1,Math,L=\,](U3){U4}
		\renewcommand*{\VertexLineColor}{black}
			\tikzset{VertexStyle/.append style = {scale=2}}
		\SO [unit=2.4,Math,L=V_1](Z1){V2}
		\SO [unit=2.4,Math,L=V_2](Z2){V3}
			\renewcommand*{\VertexLineColor}{lightgray}
			\tikzset{VertexStyle/.append style = {scale=.5}}
		\SO [unit=1.2,Math,L=\,](V2){V1}
		\EA [unit=1,Math,L=\,](V1){V4}
		\EA [unit=1,Math,L=\,](V4){V5}
		\renewcommand*{\VertexLineColor}{black}
		\tikzset{VertexStyle/.append style = {scale=2}}
		\SO [unit=2.4,Math,L={Y}](V2){Y}
		\Edges(Z1,V2,Y)
		\Edges(Z2,V3,Y)	
			\Edges(Z1,V3,Y)
			\Edges(Z2,V2,Y)	
		\renewcommand*{\VertexLineColor}{lightgray}
		\Edges(V1,Y)
		
		\renewcommand*{\VertexLineColor}{blue}
		\renewcommand*{\EdgeColor}{blue}
		\draw [darkgreen,rounded corners,dashed,line width=.8pt] (-.8,-1.55) rectangle (2.8,-3.05);
		\tikzset{VertexStyle/.style = {draw = none}}
		\SO [Math,L=\textcolor{darkgreen}{ \qquad \text{\textbf{layer}}\,\boldsymbol{\ell}},unit=.85](V3){lay1}
		\NO [Math,L=\textcolor{red}{\text{\textbf{Statistical independence}}},unit=1.5](Z1){statInd}
		\SOEA [Math,L=\textcolor{blue}{\text{\textbf{Independence of mechanisms}}},unit=.9](Y){causInd}
		\renewcommand*{\EdgeColor}{red}
		\Edges[style={dashed}](statInd,Z1)
		\Edges[style={dashed}](statInd,Z2)
		\renewcommand*{\EdgeColor}{blue}
		\Edges[style={dashed}](causInd,V2)
		\Edges[style={dashed}](causInd,V3)
		\draw (-1.25,-.68) rectangle (3,-4.1);
		\node [draw=none] at (-1,-2.5) {\rotatebox[origin=c]{90}{\textit{Endogenous variables}}};
		\node [draw=none] at (-1.1,-.1)  {\rotatebox[origin=c]{90}{\textit{Latent}}};
		\node [draw=none] at (-.85,-.1)  {\rotatebox[origin=c]{90}{\textit{input}}};
		\node [draw=none] at (-1,-4.8) {\rotatebox[origin=c]{90}{\textit{Output}}};
		\end{scope}
		
\end{tikzpicture}
		\subcaption{\label{fig:CGM}}
	\end{subfigure}
	\hfill
	\begin{subfigure}{.26\textwidth}
		\begin{tikzpicture}
	\tikzset{
		ncbar angle/.initial=90,
		ncbar/.style={
			to path=(\tikztostart)
			-- ($(\tikztostart)!#1!\pgfkeysvalueof{/tikz/ncbar angle}:(\tikztotarget)$)
			-- ($(\tikztotarget)!($(\tikztostart)!#1!\pgfkeysvalueof{/tikz/ncbar angle}:(\tikztotarget)$)!\pgfkeysvalueof{/tikz/ncbar angle}:(\tikztostart)$)
			-- (\tikztotarget)
		},
		ncbar/.default=0.5cm,
	}

	\tikzset{square left brace/.style={ncbar=0.5cm}}
	\tikzset{square right brace/.style={ncbar=-0.5cm}}
	
	\tikzset{round left paren/.style={ncbar=0.1cm,out=-105,in=105}}
	\tikzset{round right paren/.style={ncbar=.1cm,out=-75,in=75}}
	
	\GraphInit[vstyle=Normal]
	\SetGraphUnit{1.8}
	\tikzset{EdgeStyle/.style = {->}}
	\tikzset{VertexStyle/.style = {draw=none,minimum size = 1cm}}
	
	\begin{scope}[scale=1.5pt]
	
	\Vertex[Math,L=\bm{y}]{Y}
	\NO [Math,L=z_K,unit=1.3cm](Y){ZK}
	\NO [Math,L=\vdots,unit=.3cm](ZK){Zd1}
	\NO [Math,L=z_k,unit=.3cm](Zd1){Zk}
	\NO [Math,L=\vdots,unit=.3cm](Zk){Zd2}
	\NO [Math,L=z_1,unit=.3cm](Zd2){Z1}
	\EA [Math,L=z'_K](ZK){Z2K}
	\EA [Math,L=\vdots](Zd1){Z2d1}
	\EA [Math,L=\vdots](Zd2){Z2d2}
	\EA [Math,L=z'_k](Zk){Z2k}
	\EA [Math,L=z'_1](Z1){Z21}
	\Edge[label=$\textcolor{red}{f}$](Zk)(Z2k)	
	\SO [Math,L=\bm{y}',unit=1.3cm](Z2K){Y2}
	
	\Edge[label=$\mathrm{id} $](Z1)(Z21)
	\Edge[label=$\mathrm{id}$](ZK)(Z2K)
	\Edge[color=lightgray,label=\textcolor{lightgray}{$\mathrm{id} $}](Zd1)(Z2d1)
	\Edge[color=lightgray,label=\textcolor{lightgray}{$\mathrm{id} $}](Zd2)(Z2d2)

	\Edge[label=$\textcolor{red}{T}$](Y)(Y2)
	\tikzstyle{LabelStyle}=[right]
	\Edge[label=$g_M^{-1}$](Y)(ZK)
	\tikzstyle{LabelStyle}=[left]
	\Edge[label=$g_M$](Z2K)(Y2)

	\tikzset{VertexStyle/.style = {draw=none,minimum size = .2cm}}
	\SOEA [Math,L=\,,unit=.15cm](ZK){ZKLE}
	\SOWE [Math,L=\,,unit=.15cm](ZK){ZKLW}
	\NOEA [Math,L=\,,unit=.15cm](Z1){Z1HE}
	\NOWE [Math,L=\,,unit=.15cm](Z1){Z1HW}
	\draw [thick] (Z1HE) to [round right paren] (ZKLE);
	\draw [thick] (Z1HW) to [round left paren ] (ZKLW);
	
	\SOEA [Math,L=\,,unit=.15cm](Z2K){Z2KLE}
	\SOWE [Math,L=\,,unit=.15cm](Z2K){Z2KLW}
	\NOEA [Math,L=\,,unit=.15cm](Z21){Z21HE}
	\NOWE [Math,L=\,,unit=.15cm](Z21){Z21HW}
	\draw [thick] (Z21HE) to [round right paren] (Z2KLE);
	\draw [thick] (Z21HW) to [round left paren ] (Z2KLW);
	\draw (.5,1.1) rectangle (1.3,2.7);
	\node (Tp) at (.9,2.85) {$\textcolor{red}{T'}$};
	\end{scope}
	\end{tikzpicture}
	
		\subcaption{\label{fig:commDiag}}
	\end{subfigure}
	\hfill
	\begin{subfigure}{.18\textwidth}
		\begin{tikzpicture}
	\GraphInit[vstyle=Normal]
	\SetGraphUnit{2}
	\tikzset{EdgeStyle/.style = {->}}
	\tikzset{VertexStyle/.append style = {minimum size = 1.2cm}}
	\begin{scope}[scale=\CGMscale,every node/.style={scale=\CGMscale}]
	\Vertex[Math,L=z_1]{Z1}
	\EA [Math,L=z_2](Z1){Z2}
	\renewcommand*{\VertexLineColor}{lightgray}
	\tikzset{VertexStyle/.append style = {scale=.5}}
	\renewcommand*{\VertexLineColor}{black}
	\tikzset{VertexStyle/.append style = {scale=2}}
	\SO [unit=2.4,Math,L=V_1](Z1){V2}
	\SO [unit=2.4,Math,L=\,](Z2){V3}
	\renewcommand*{\VertexLineColor}{lightgray}
	\tikzset{VertexStyle/.append style = {scale=.5}}
	\renewcommand*{\VertexLineColor}{black}
	\tikzset{VertexStyle/.append style = {scale=2}}
	\SO [unit=2.4,Math,L=\textcolor{blue}{T({Y})}](V2){Y}
	\Edges(Z1,V2,Y)
	\Edges(Z2,V3,Y)	
	\Edges(Z1,V3,Y)
	\Edges(Z2,V2,Y)	
	\renewcommand*{\VertexLineColor}{lightgray}
	\Edges(V1,Y)
	
	\renewcommand*{\VertexLineColor}{blue}
	\renewcommand*{\EdgeColor}{blue}
	\SO [unit=2.4,Math,L=\,](Z2){V2}
	\tikzset{VertexStyle/.style = {inner sep = 0pt,scale=1.6}}
	\SO [L=$\frac{V_2}{\textcolor{blue}{\,T^2\!(\!V_2\!)\,}}$,unit=2.4](Z2){V3bis}
	\tikzset{VertexStyle/.style = {draw = none}}
	\end{scope}
	\end{tikzpicture}
		\subcaption{\label{fig:exampleID}}
	\end{subfigure}
	\caption{(a) Illustration of the generative mapping and a disentangled transformation.  (b) Causal graph of an example CGM showing different types of independence between nodes. (c) Commutative diagram showing sparse transformation $T'$ in latent space associated to a disentangled transformation $T$. (d) Illustration of intrinsic disentanglement with $\mathcal{E}=\{2\}$.
		\label{fig:CGMall}}
\end{figure}

\vspace*{-.2cm}
\subsection{A Causal Generative Model (CGM) framework}
\vspace*{-.2cm}
We consider a generative model $M$ that implements a function $g_M$, which maps a latent space $\mathcal{Z}$ to a manifold $\mathcal{Y}_M$ where the learned data points live, embedded in ambient Euclidean space $\mathcal{Y}$  (Fig.~\ref{fig:commDiagManifold}). A sample from the model is generated by drawing a realization $z$ from a prior latent variable distribution with mutually independent components, fully supported in $\mathcal{Z}$. We will use the term \textit{representation} to designate a mapping $r$ from $\mathcal{Y}_M$ to some representation space $\mathcal{R}$ (we also call $r(y)$ the representation of a point $y\in \mathcal{Y}_M$). In particular, we will assume (see Def.~\ref{def:embed} and Prop.~\ref{def:embed}, in Appendix A) that $g_M$ is (left)-invertible, such that $g_M^{-1}$ is a representation of the data, called the \textit{latent representation}. 

Assuming the generative model is implemented by a non-recurrent neural network, we can use a causal graphical model representation of the entailed computational graph implementing the mapping $g_M$ through a succession of operations (called \textit{functional assignments} in causal language), as illustrated in Fig.~\ref{fig:CGM}, that we will call Causal Generative Model (CGM). In addition to the latent representation, we can then choose a collection of possibly multi-dimensional endogenous (internal) variables represented by nodes in the causal graph, such that the mapping $g_M$ is computed by composing the \textit{endogenous variable assignment} $v_M$ with the \textit{endogenous mapping} $\tilde{g}_M$ according to the diagram 
\[
\mathcal{Z}\overset{v_M}{\rightarrow}\bm{\mathcal{V}}_M\overset{\widetilde{g}_M}{\rightarrow}\mathcal{Y}_M\,.
\] 
A paradigmatic choice for these variables is the collection of output activation maps of each channel in one hidden layer of a convolutional neural network, as illustrated in Fig.~\ref{fig:CGM}. 
As for the latent case, we use mild conditions that guarantee $\tilde{g}_M$ to be left-invertible, defining the \textit{internal representation} of the network (see Def.~\ref{def:embed} and Prop.~\ref{prop:embed} in Appendix A). Given the typical choice of dimensions for latent and endogenous variables, the $V_k$'s are also constrained to take values in subsets $\mathcal{V}_M^k$ of smaller dimension than their Euclidean ambient space $\mathcal{V}^k$. As detailed in Appendix A, we will denote the \textit{endogenous image} sets of the form 
 $\bm{\mathcal{V}}^\mathcal{E}_M=\left \{\bm{v}\in \prod_{k\in\mathcal{E}}\mathcal{V}^k\colon \bm{v}=(v_k(\bm{z}))_{k\in\mathcal{E}},\,\bm{z} \in \mathcal{Z}\right\}$ for a subset of variables indexed by $\mathcal{E}$ (amounting to $\bm{\mathcal{V}}_M$ when $\mathcal{E}$ includes all endogenous variables).

This CGM framework allows defining counterfactuals in the network following \citet{pearl2012causal}.
\begin{defn}[Unit level counterfactual, informal]\label{def:interInf}
	Given CGM $M$,  the \emph{interventional model} $M_{\bm h}$ is obtained by replacing assignments of the subset variables $\mathcal{E}$ and by the vector of assignments ${\bm h}$. Then for any latent input ${\bm z}$, called \emph{unit}, the \emph{unit-level counterfactual} is the output of $M_{\bm h}$:
	$
	g_{M_{\bm h}}({\bm z})\,.
	$
\end{defn}
Def.~\ref{def:interInf} is also in line with the concept of \textit{potential outcome} \citep{imbens2015causal}. Importantly, counterfactuals induce a transformation of the output of the generative model.

\begin{defn}[Counterfactual mapping]\label{def:countmap}
	Given an embedded CGM, we call the transformation
	\[
	\overset{\curvearrowright}{Y}^{\mathcal{E}}_{\bm h}: {y}\mapsto 
	g_{M_{\bm h}}\left({g_M^{-1}(y)}\right)
	\]
	the ${\bm h}$-counterfactual mapping.	We say it is \emph{faithful} to $M$ whenever $\overset{\curvearrowright}{Y}^{\mathcal{E}}_{\bm h}\left[ \mathcal{Y}_M \right]\subset\mathcal{Y}_M$.
\end{defn}

We introduce faithfulness of a counterfactual mapping to account for the fact that not all interventions on internal variables will result in an output that could have been generated by the original model. In the context of generative model, non-faithful counterfactuals generate examples that leave the support of the distribution learned from data, possibly resulting in an artifactual output (assigning a large value to a neuron may saturate downstream neurons), or allowing extrapolation to unseen data. 

\subsection{Unsupervised disentanglement: from statistical to causal principles}

The classical notion of disentangled representation (e.g. \citet{bengio2013representation,kulkarni2015deep}), posits individual latent variables ``\textit{sparsely encode real-world transformations}''. Although the concept of \textit{real-world transformations} remains elusive, this insight, agnostic to statistical concepts, has driven supervised approaches to disentangling representations, where relevant transformations are well-identified and manipulated explicitly using appropriate datasets and training procedures.

In contrast, unsupervised learning approaches to disentanglement need to learn such \textit{real-world transformations} from unlabeled data. In order to address this challenge, SOTA approaches seek to encode such transformations by changes in individual latent factors, and resort to a statistical notion of disentanglement, enforcing conditional independence between latent factors \citep{higgins2016beta}. This statistical approach leads to several issues:
\begin{itemize}[leftmargin=.5cm]
	\item The i.i.d. constraints on the prior distribution of latent variables, impose statistical independence between disentangled factors on the data distribution. This is unlikely for many relevant properties, counfounded by factors of the true data generating mechanisms (e.g. skin and hair color).
	\item Independence constraints are not sufficient to specify a disentangled representation, such that the problem remains ill-posed \citep{locatello2018challenging}. As a consequence, finding an appropriate inductive bias to learn a representation that benefits downstream tasks remains an open question.
	\item To date, SOTA unsupervised approaches are mostly demonstrated on synthetic datasets, and beyond MNIST \textbf{disentangling complex real world data has been limited} to the well-calibrated CelebA dataset. On complex real-world datasets, \textbf{disentangled generative models exhibit visual sample quality far below non-disentangled SOTA} (e.g. BigGAN exploited in our work \citep{brock2018large}). 
\end{itemize}

We propose an non-statistical definition of disentanglement by first phrasing mathematically the transformation-based insights \citep{bengio2013representation,kulkarni2015deep}. Consider a transformation $T$ acting on the data manifold $\mathcal{Y}_M$. As illustrated by the commutative diagram of Fig.~\ref{fig:commDiag}, disentanglement of such property then amounts to having $T$ correspond to a transformation $T'$ of the latent space that would act only on a single variable $z_k$, using transformation $f$, leaving the other latent variables available to encode other properties. More explicitly we have
\[T'({\bm z}):(z_1,..,z_k,..,z_K) \mapsto (z_1,..,f(z_k),..,z_K)\,,\quad \text{and}\quad T(g_M({\bm z}))= g_M(T'({\bm z}))\,.\]
It is then natural to qualify two transformations $T_1$ and $T_2$ as disentangled (from each other), whenever they modify different components of the latent representation (see Def.~\ref{def:reldisent}. This amounts to saying that the transformations follow the causal principle of independent mechanisms \citep{causality_book,pmlr-v80-parascandolo18a}.

Due to the fact that it relies on transformation of the latent representation, that are exogenous to the CGM, we call this notion \textit{extrinsic disentanglement}.
This ``functional'' definition has the benefit of being agnostic the the subjective choice of the property to disentangle, and to the statistical notion of independence. However, we can readily notice that, if applied to the latent space (where components are i.i.d. distributed), this functional notion of disentangled transformation still entails statistical independence between disentangled factors. We thus need to exploit a different representation to uncover possibly statistically related properties, but disentangled in the sense of our definition.

\subsection{Disentangling by manipulating internal representations}
As illustrated in the CGM of Fig.~\ref{fig:CGM}, in contrast to latent variables, properties encoded by endogenous variables of the graphical model are not necessarily statistically independent due to common latent cause, but may still reflect interesting properties of the data that can be intervened on independently, following the principle of independence of mechanisms. We thus extend our definition of disentanglement to allow transformations of the internal variables of the network as follows.
\begin{defn}[Intrinsic disentanglement, informal]\label{def:intdisent}
	In a CGM $M$, a transformation $T:\mathcal{Y}_M\rightarrow \mathcal{Y}_M$ is \emph{intrinsically disentangled} with respect to a subset $\mathcal{E}$ of endogenous variables, if
	there is transformation $T'$ acting on the internal representation space such that for any endogenous value ${\bm v}$ 
	\begin{equation}\label{eq:intrDisant}
	T(\widetilde{g}_M({\bm v})) = \widetilde{g}_M(T'({\bm v}))
	\end{equation}
	where $T'({\bm v})$ only affects the variables indexed by $\mathcal{E}$. 
\end{defn}
 Fig.~\ref{fig:exampleID} illustrates this second notion of disentanglement, where the split node indicates that the value of $V_2$ is computed as in the original CGM (Fig.~\ref{fig:CGM}) before applying transformation $T^2$ to the outcome. 
 While the above definition applies to a single transformation, straightforward extensions of this concept to families of
  transformations are provided in Appendix F.
Faithful counterfactuals represent examples of disentangled transformations:
\begin{prop}[Counterfactuals and disentanglement, informal]\label{prop:faith}
	Consider an intervention on subset $\mathcal{E}$, its associated counterfactual mapping is faithful if and only if it is disentangled. For interventions on variables that remain within the support of the original marginal distribution, it is sufficient that $\mathcal{E}$ and its complement $\overbar{\mathcal{E}}$ do not have common latent ancestors.
\end{prop}
This indicates that finding faithful counterfactuals can be used to learn disentangled transformations. 

	\begin{figure*}
		\hspace*{-1cm}
		\begin{subfigure}{.38\textwidth}
			\vspace*{.8cm}
			\begin{minipage}{\textwidth}
				\tikzset{every picture/.style={scale=0.47}}
				\hspace*{.45cm}\begin{tikzpicture}
	\tikzstyle{every node}=[trapezium, draw, minimum width=.1cm,
	trapezium left angle=60, trapezium right angle=120]				
				\draw[->,line width=1pt]  (2,0.45) -- (2,0.105);
				\draw[->,line width=1pt]  (2,-0.3) -- (2,-0.62);
				\draw[->,line width=1pt]  (2,-1.66) -- (2,-1.93);
				\draw[->,line width=1pt]  (2,-1.05) -- (2,-1.32);

	\newcommand{\Colors}{{%
			"FFDDFF",
			"DDDDFF",
			"DDFFDD",
			"FFDDDD",
			"FFFFDD",
			"696969",
			"808080",
			"A9A9A9",
			"C0C0C0"
		}}
		\definecolor{lightgray}{HTML}{CCCCCC}
		\definecolor{selected}{HTML}{9999FF}
		\newcounter{i}
		\setcounter{i}{0}

		\begin{scope}[scale=.1,every node/.append style={scale=.6}]

		\foreach \layer in {0,...,3}{
			\pgfmathparse{0 - \layer}
			\def\y{\pgfmathresult}
			
			\ifthenelse{\layer = 3}{\pgfmathtruncatemacro{\xlimUp}{7}}{\pgfmathtruncatemacro{\xlimUp}{16 - \layer}}
			\ifthenelse{\layer = 3}{\pgfmathtruncatemacro{\xlimDown}{9}}{\pgfmathtruncatemacro{\xlimDown}{1+\layer}}
			
			\pgfmathsetmacro{\thecurrentcolor}{\Colors[\value{i}]}
			\definecolor{currentcolor}{HTML}{\thecurrentcolor}

			\foreach \x in {\xlimDown,...,\xlimUp}{
				\node[fill=currentcolor] (\x-\layer) at (2.5*\x,7.2*\y) {};
				
			}
%
			\stepcounter{i}
			\ifthenelse{\layer = 1}{
				\foreach \x in {\xlimDown,...,11}{
					\node[fill=selected] (\x-\layer) at (2.5*\x,7.2*\y) {};
					
				}
			}
			
		}
		
		
		\draw[blue] (2.5*6.5,-1*7.2) ellipse (2.5*6 and .35*7.2);
			
		\node[draw=none] (latent) at (2.5*8,1*7.2) {Latent sample ${\bm z}_1$};
		\node[draw=none] (Ecompl) at (.5*1.5,-1.6*7.2) {\textcolor{blue}{$\overbar{\mathcal{E}}$}};
			\node[draw=none] (output) at (2.5*8,-3.6*7.2) {Original 1};
		\end{scope}
		\end{tikzpicture}
		\hspace*{-.2cm}
		\begin{tikzpicture}
		\tikzstyle{every node}=[trapezium, draw, minimum width=.1cm,
		trapezium left angle=60, trapezium right angle=120]
						\draw[->,line width=1pt]  (2,0.45) -- (2,0.105);
						\draw[->,line width=1pt]  (2,-0.3) -- (2,-0.62);
						\draw[->,line width=1pt]  (2,-1.66) -- (2,-1.93);
						\draw[->,line width=1pt]  (2,-1.05) -- (2,-1.32);
		\newcommand{\Colors}{{%
				"FFDDFF",
				"DDDDFF",
				"DDFFDD",
				"FFDDDD",
				"FFFFDD",
				"696969",
				"808080",
				"A9A9A9",
				"C0C0C0"
			}}

			\definecolor{lightgray}{HTML}{CCCCCC}
			\definecolor{selected}{HTML}{9922FF}
			\setcounter{i}{0}
			
			\begin{scope}[scale=.1,every node/.append style={scale=.6}]

			\foreach \layer in {0,...,3}{
				\pgfmathparse{0 - \layer}
				\def\y{\pgfmathresult}
				
				\ifthenelse{\layer = 3}{\pgfmathtruncatemacro{\xlimUp}{7}}{\pgfmathtruncatemacro{\xlimUp}{16 - \layer}}
				\ifthenelse{\layer = 3}{\pgfmathtruncatemacro{\xlimDown}{9}}{\pgfmathtruncatemacro{\xlimDown}{\layer}}
				
				\pgfmathsetmacro{\thecurrentcolor}{\Colors[\value{i}]}
				\definecolor{currentcolor}{HTML}{\thecurrentcolor}

				\foreach \x in {\xlimDown,...,\xlimUp}{
					\node[fill=currentcolor] (\x-\layer) at (2.5*\x,7.2*\y) {};
					
				}
%
				\stepcounter{i}
				\ifthenelse{\layer = 1}{
					\foreach \x in {12,...,\xlimUp}{
						\node[fill=selected] (\x-\layer) at (2.5*\x,7.2*\y) {};
						
					}
				}
				
			}
			
			\draw[blue] (2.5*13.5,-1*7.2) ellipse (2.5*2.5 and .4*7.2);
			\node[draw=none] (latent) at (2.5*8,7.2) {Latent sample ${\bm z}_2$};
			\node[draw=none] (output) at (2.5*8,-3.6*7.2) {Original 2};
			
			\node[draw=none] (E) at (2.5*15.8,-1.6*7.2) {\textcolor{blue}{ $\mathcal{E}$}};
			\end{scope}
			
			\end{tikzpicture}
		
		\hspace*{1.2cm}
		\begin{tikzpicture}
		\tikzstyle{every node}=[trapezium, draw, minimum width=.1cm,
		trapezium left angle=60, trapezium right angle=120]
				
						\draw[->,line width=1pt]  (2,-1.66) -- (2,-1.93);
						\draw[->,line width=1pt]  (2,-1.05) -- (2,-1.32);
				
				
		\newcommand{\Colors}{{
				"9999FF",
				"99FF99",
				"FF9999",
				"FFFF99",
				"696969",
				"808080",
				"A9A9A9",
				"C0C0C0"
			}}
			\definecolor{lightgray}{HTML}{EEEEEE}
			\definecolor{selected}{HTML}{9999FF}
			\definecolor{selected2}{HTML}{9922FF}
			\setcounter{i}{0}
			
			\begin{scope}[scale=.1,every node/.append style={scale=.6}]
			
			\foreach \layer in {0,...,0}{
				\pgfmathparse{0.2 - \layer}
				\def\y{\pgfmathresult}
				
				\ifthenelse{\layer = 3}{\pgfmathtruncatemacro{\xlimUp}{7}}{\pgfmathtruncatemacro{\xlimUp}{16 - \layer}}
				\ifthenelse{\layer = 3}{\pgfmathtruncatemacro{\xlimDown}{9}}{\pgfmathtruncatemacro{\xlimDown}{\layer}}
				
				\pgfmathsetmacro{\thecurrentcolor}{\Colors[\value{i}]}
				\definecolor{currentcolor}{HTML}{\thecurrentcolor}

				\foreach \x in {\xlimDown,...,\xlimUp}{
					\node[fill=lightgray] (\x-\layer) at (2.5*\x,7.2*\y) {};
					
				}
%

				\ifthenelse{\layer = 1}{
					\foreach \x in {\xlimDown,...,11}{
						\node[fill=selected] (\x-\layer) at (2.5*\x,7.2*\y) {};
						
					}
					\foreach \x in {12,...,\xlimUp}{
						\node[fill=selected2] (\x-\layer) at (2.5*\x,7.2*\y) {};
						
					}
				}

			}
			
			\foreach \layer in {1,...,3}{
				\pgfmathparse{0 - \layer}
				\def\y{\pgfmathresult}
				
				\ifthenelse{\layer = 3}{\pgfmathtruncatemacro{\xlimUp}{7}}{\pgfmathtruncatemacro{\xlimUp}{16 - \layer}}
				\ifthenelse{\layer = 3}{\pgfmathtruncatemacro{\xlimDown}{9}}{\pgfmathtruncatemacro{\xlimDown}{\layer}}
				
				\pgfmathsetmacro{\thecurrentcolor}{\Colors[\value{i}]}
				\definecolor{currentcolor}{HTML}{\thecurrentcolor}

				\foreach \x in {\xlimDown,...,\xlimUp}{
					\node[fill=currentcolor] (\x-\layer) at (2.5*\x,7.2*\y) {};
					
				}
%

				\ifthenelse{\layer = 0}{
					\foreach \x in {\xlimDown,...,\xlimUp}{
						\node[fill=black] (\x-\layer) at (2.5*\x,7.2*\y) {};
						
					}
				}
				\stepcounter{i}
				
				\ifthenelse{\layer = 1}{
					\foreach \x in {\xlimDown,...,11}{
						\node[fill=selected] (\x-\layer) at (2.5*\x,7.2*\y) {};
						
					}
					\foreach \x in {12,...,\xlimUp}{
						\node[fill=selected2] (\x-\layer) at (2.5*\x,7.2*\y) {};
						
					}
				}

			}
			

\draw [blue,decorate,decoration={brace,amplitude=3pt,raise=1pt},yshift=0pt]
(.73*2.5,-.9*7.2) -- (11.6*2.5,-.9*7.2) node[draw=none]  {
	};
\node[draw=none] (output) at (2.5*7.2,-.35*7.2) {\textcolor{blue}{$\widetilde{{\bm v}}({\bm z}_1)$}};
\draw [blue,decorate,decoration={brace,amplitude=3pt,raise=1pt},yshift=0pt]
(11.6*2.5,-.9*7.2) -- (15.7*2.5,-.9*7.2) node[draw=none]  {
};
\node[draw=none] (output) at (2.5*13.7,-.35*7.2) {\textcolor{blue}{${\bm v}({\bm z}_2)$}};
			\node[draw=none] (output) at (2.5*8,-3.6*7.2) {Hybrid sample};
			\end{scope}

\end{tikzpicture}
				\tikzset{every picture/.style={scale=1}}
			\end{minipage}
			\vspace*{.8cm}
			\subcaption{\label{fig:hybridPrinciple}}
		\end{subfigure}
			\hspace*{-.2cm}
		\begin{subfigure}{.7\textwidth}
				\includegraphics[width=.99\textwidth,trim={4cm 2.1cm 4cm 1.7cm},clip]{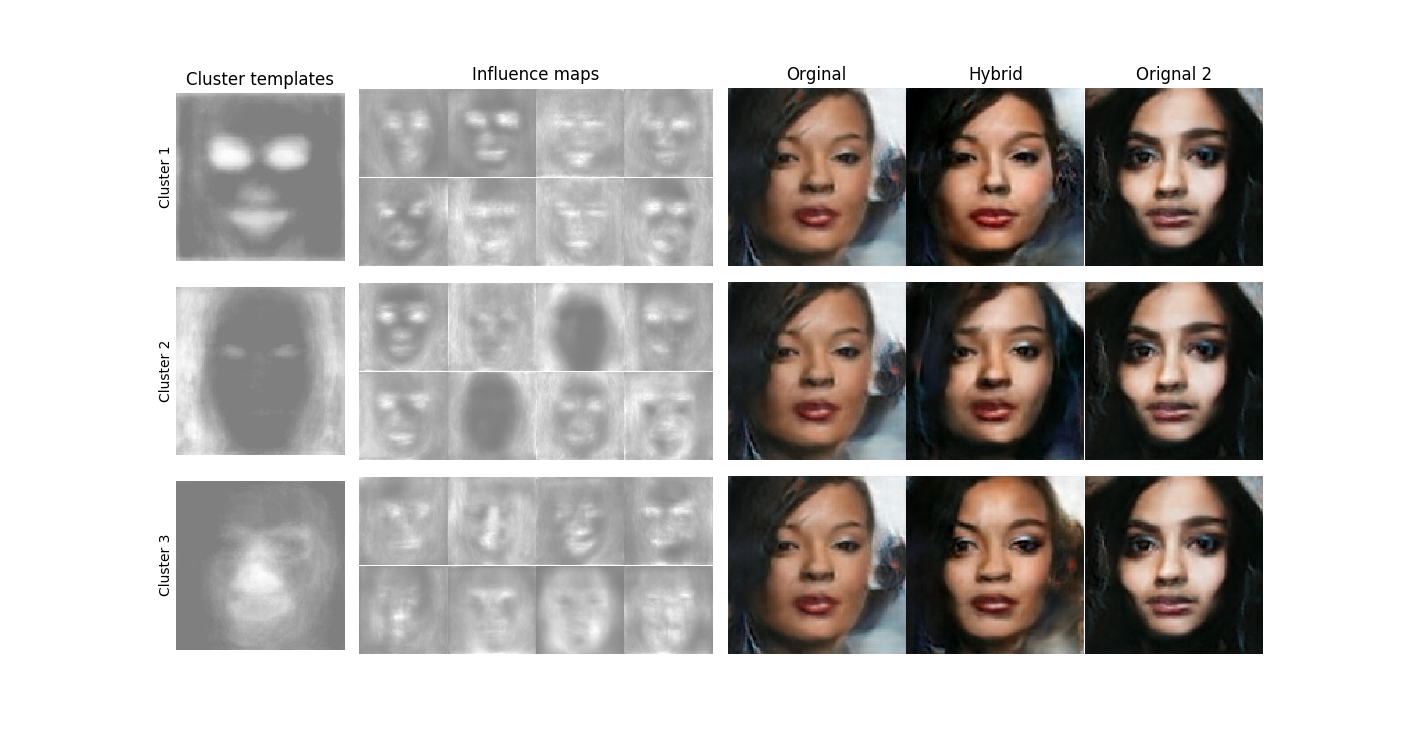}
				\subcaption{\label{fig:celeba_began}}
		\end{subfigure}
		\caption{\textbf{Generation of influence maps.} (a) Principle of sample hybridization through counterfactuals. (b) Left: Clustering of influence maps for a BEGAN trained on the CelebA dataset (see text). Center: example EIM of each cluster. Right: samples of the hybridization procedure using as module all channels of the intermediate layer belonging to the cluster of corresponding row. See Fig.~\ref{fig:extraBEGANExples} for additional samples.}
	\end{figure*}

\section{Finding modularity in deep generative models}\label{sec:deepIM}
\subsection{Defining modularity}
Building on Sec.~\ref{sec:background}, we define modularity as a structural property of the internal representation, allowing (with the immediately following Prop.~\ref{prop:modulardis}) to implement arbitrary disentangled transformations.
\begin{defn}[Modularity]\label{def:modular}
	A subset of endogenous variables $\mathcal{E}$ is called modular whenever $\mathcal{V}_M$ is the Cartesian product of $\bm{\mathcal{V}}_M^\mathcal{E}$ by $\bm{\mathcal{V}}_M^{\overbar{\mathcal{E}}}$.
\end{defn} 
\begin{prop}[Modularity implies disentanglement]\label{prop:modulardis}
	If $\mathcal{E}$ is modular, then any transformation applied to it staying within its input domain is disentangled.
\end{prop}
The proof is a natural extension of the proof of Proposition~\ref{prop:faith}. Both the Definition and the Proposition have trivial extensions to multiple modules (along the line described in Appendix F). While we have founded this framework on a functional definition of disentanglement that applies to transformations, the link made here with an intrinsic property of the trained network allows us to define a disentangled representation as follows: consider of partition of the intermediate representation in several modules, such that their Cartesian product is a factorization of $\mathcal{V}_M$. We can call this partition a \textbf{disentangled representation} since any transformation applied to a given module leads to a valid transformation in the data space (it is relatively disentangled following Def.~\ref{def:reldisent}). Interestingly, we obtain that a disentangled representation requires the additional introduction of a partition of the considered set of latent variables into modules. This extra requirement was not considered in classical approaches to disentanglement as it was assumed that each single scalar variables could be considered as an independent module. Our framework provides an insight relevant to artificial and biological systems: as the activity of multiple neurons can be strongly tied together, the concept of representation may not be meaningful at the "atomic" level of single neurons, but require to group them into modules forming a "mesoscopic" level, at which each group can be intervened on independently.

\subsection{Hybridization as a disentangled transformation}

\michel{To add:
\begin{itemize}
	\item non-statistical approach requires to search a high dimensional space for disentangled transformations (famillies?)
	\item prop 1 suggest that faithful counterfactuals can be used, and transformations that respecting the marginal distribution are good candidates: sampling values from the marginal is a way to approach that
	\item finding the optimal subset of endogenous variables to interevene on has combinatorial complexity, and requires a simpler strategy to gather variables. Under linear-non-linear assumptions, NMF is an optimal fine to coarse strategy.
\end{itemize}
Sketch of the proposition: assume it exist a partition of the data such that hybrids are disentangled, assume an auxiliary representation is available, that capture invariance of each famillies such that.... and such that the mapping is well approaximated by a linear function...
then the NMF/clustering solution of the influence matrix allows identifying the subgroups.
}
As stated in Sec.~\ref{sec:background}, a functional definition of disentanglement, leaves unanswered how to find relevant transformations.
Prop.~\ref{prop:faith}~and~\ref{prop:modulardis} provide the following hints: (1) Once a modular structure is found in the network, a broad class of disentangled transformations are available, (2) Transformations that stay within their input domain are good candidates of disentanglement, (3) Counterfactual interventions implicitly defines transformation. We follow these guidelines by assigning a constant value $\bm v_0$ to a subset of endogenous variables $\mathcal{E}$ to define counterfactuals (i.e. $\bm h$ is a constant function), aiming for faithful ones by constraining $\bm v_0$ to belong to $\bm{\mathcal{V}}^\mathcal{E}_M$. To avoid characterizing  $\bm{\mathcal{V}}^\mathcal{E}_M$, we rely on  sampling from the (joint) marginal distribution of the variables in $\mathcal{E}$. 

To illustrate the procedure, we consider a standard feed-forward multilayer neural network and choose endogenous variables to be the collection of all output activations of channels of a given layer $\ell$. Let $\mathcal{E}$ be a subset of these channels, the hybridization procedure, illustrated in Fig.~\ref{fig:hybridPrinciple} goes as follows. We take two independent examples of the latent variable ${\bm z}_1$ and
${\bm z}_2$, that will generate two \textit{original} examples of the output $({\bm y}_1,{\bm y}_2)=(g_M({\bm z}_1),g_M({\bm z}_2))$ (that we call
\textit{Original 1} and \textit{Original 2}). We also memorize the tuple
${\bm v}({\bm z}_2)$ gathering values of variables indexed by $\mathcal{E}$ when generating Original 2, and $\widetilde{\bm v}({\bm z}_1)$ the tuple of values taken by all other endogenous variables on this layer, but when generating Original 1. Assuming the choice of $\mathcal{E}$ identifies a modular structure, $\widetilde{\bm v}({\bm z}_1)$ and ${\bm v}({\bm z}_2)$ would encode different aspects of their corresponding generated images, such that one can generate a 
\textit{hybrid example} mixing these features by assigning the collection of output values of layer $\ell$  with the concatenated tuple 
$\left( \widetilde{\bm v}({\bm z}_1),\,{\bm v}({\bm z}_2)\right)$ and feeding it to the
downstream part of the generator network. 

\subsection{Measuring causal effects}
The above counterfactual hybridization framework allows assessing how a given module $\mathcal{E}$ affects the output of the generator. For this purpose we quantify its causal effect 
by repetitively generating pairs $({\bm z}_1,\,{\bm z}_2)$ from the latent space, where both vectors are sampled independently of each
other. We then generate and collect hybrid outputs following the above described procedure for a batch of samples and use them to estimate an \textit{influence map} as the mean absolute effect:
\begin{equation}\label{eq:IM}
\mathit{IM}(\mathcal{E}) = \mathbb{E}_{{\bm z}_2\sim P(\textbf{Z})} 
\mathbb{E}_{{\bm z}_1\sim P(\textbf{Z})} 
\left|Y^{\mathcal{E}}_{v({\bm z}_2)}({\bm z}_1)-Y({\bm z}_1)\right|
\end{equation}
where $Y({\bm z}_1)=g_M({\bm z}_1)$ is the non-intervened output of the generator for latent input ${\bm z}_1$. In eq.~\ref{eq:IM}, the difference inside the absolute value can be interpreted as a \textit{unit-level causal effect} in the potential outcome framework \citep{imbens2015causal}, and taking the expectation is analogous to computing the \textit{average treatment effect}. Our approach has however two specificities: (1) we take the absolute value of the unit-level causal effects, as their sign may not be consistent across units, (2) the result is averaged over many interventions corresponding to different values of ${\bm z}_2$.

While $\mathit{IM}$ has the same dimension as the output image, we then average it across color channels to get a single grayscale heat-map pixel map. We also define a scalar quantity to quantify the magnitude of the causal effect, the \textit{individual influence} of module $\mathcal{E}$, by averaging $\mathit{IM}$ across output pixels.

\subsection{Unsupervised detection of modules and counterfactual images}
A challenge with the hybridization approach is to select the subsets $\mathcal{E}$ to intervene on, especially with networks containing a large
amount of units or channels per layer. We use a fine to coarse approach to
extract such groups, that we will describe in the context of convolutional layers. First, we estimate \textit{elementary influence maps} (EIM) associated to each individual output channel $c$ of each convolutional layer of the network (i.e. we set $\mathcal{E}=\{c\}$ in eq.~(\ref{eq:IM})). Then influence maps are grouped by similarity to define modules at a coarser scale, as we will describe in detail below.

Representative EIMs for channels of convolutional layers of a VAE trained on the CelebA face dataset (see
result section) are shown in Supplementary Fig.~\ref{fig:influenceExples} and suggest channels are  functionally segregated, with for example some
influencing finer face feature (eyes, mouth,...) and others affecting the background of the image or the hair. This supports the idea that individual channels can be grouped into modules that are mostly dedicated to one particular aspect of the output.

In order to achieve this grouping in an unsupervised way, we perform clustering of channels using their EIMs as feature vectors as follows. We first pre-process each influence map by: (1) performing a local averaging with a small rectangular sliding window to smooth the maps spatially, (2) thresholding the resulting maps at the 75\% percentile of the distribution of values over the image to get a binary image. After flattening image dimensions, we get a (channel$\times$pixels) matrix $\mathbf{S}$ which is then fed to a Non-negative Matrix Factorization (NMF) algorithm with manually selected rank $K$, leading to the factorization $\mathbf{S}=\mathbf{WH}$. From the two resulting factor matrices, we get the $K$ cluster template patterns (by reshaping each rows of $\mathbf{H}$ to image dimensions), and the weights representing the contribution of each of these pattern to individual maps (encoded in $\mathbf{W}$). Each influence map is then ascribed a cluster based on which template pattern contributes to it with maximum weight. The choice of NMF is justified by its success in isolating meaningful parts of images in different components \citep{lee1999learning}. However, we will also compare our approach to the classical k-means clustering algorithm. 

In order to further justify our NMF based approach, we also introduce a toy generative model.
\begin{model}\label{mod:twolay}
	Consider $\boldsymbol{Z}$ a vector of $K$ i.i.d. uniformly distributed RVs. Assume a neural network with one hidden layers composed of $m$ vector variables $\boldsymbol{V}_k$ such that
	\[
	\boldsymbol{V}_k = S(\boldsymbol{H}_k {Z}_k)\,,
	\]
	with $\boldsymbol{H}_k\in \mathbb{R}^n,\,n>1$ and $S$ a strictly increasing activation function applied entry-wise to the components of each vector (e.g. a leaky ReLU). These endogenous variables are mapped to the output
	\vspace*{-.3cm}
	\[
	\boldsymbol{Y} = \sum_{k=1}^K \boldsymbol{W}^k \boldsymbol{V}_k\,,
	\]
	with matrices $\boldsymbol{W}^k\in \mathbb{R}^{m\times n},\, m>nK $. Assume additionally the following random choice for the model parameters: (1) all coefficients of $\boldsymbol{H}_k$'s are sampled i.i.d. from an arbitrary distribution that has a density with respect to the Lebesgue measure, (2) there exists $K$ sets of indices $I_k$ over $[1,\,m]$ each containing at least one element $l_k \in I_k$ such that for all $j\neq k$,	$l_k\notin I_j$, (3) For a given column of $\boldsymbol{W}^k$, coefficient in $I_k$ are sampled i.i.d. from an arbitrary distribution that has a density with respect to the Lebesgue measure, while the remaining coefficients are set to zero.
\end{model}
The specific condition on the $I_k$'s enforced in (2) encodes the assumption that there is an area in the image that is only influenced by one of the modules. For example, assuming a simple background/object module pair, it encodes that the borders of the image never belong to the object while the center of the image never belong to background. For this model, we get the following identifiability result.

\begin{prop}\label{prop:towlayid}
	For Model~\ref{mod:twolay}, with probability 1 we have:
	\vspace*{-.2cm}
	
	(1) The partition of the hidden layer entailed by the $K$ vectors $\{\boldsymbol{V}_k\}$ corresponds to a disentangled representation (i.e. each vector is modular relatively to the others).
	\vspace*{-.2cm}
	
	(2) Assume influence maps $\mathit{IM}(i,k)$ of each component $i$ in each vector $\boldsymbol{V}_k$ are known and build the $(m\times nK)$ binary matrix $\boldsymbol{B}$ by concatenating binary column vectors $\boldsymbol{B}_{i,k}=\mathit{IM}(i,k)>0$. Non-negative matrix factorization of $\boldsymbol{B}$ is unique (up to trivial transformations) and identifies the subsets of endogenous variables associated to each $\boldsymbol{V}_k$.
\end{prop}
This justifies the use of NMF of a thresholded version of the influence map matrix computed for individual endogenous variables (to generate a binary matrix summarizing their significant influences on each output pixel). Moreover, the application of the sliding window is justified in order to enforce the similarity between the influence maps belonging to the same module, reflected by the condition on identical support $I_k$ for all columns of  $\boldsymbol{W}^k$ in Model~\ref{mod:twolay}, and favoring low-rank matrix factorization.

\section{Experiments}
\subsection{DCGAN, $\beta$-VAE and BEGAN on the CelebA dataset}
We first investigated modularity of genrative models trained on the CelebFaces Attributes Dataset (CelebA)\citep{liu2015faceattributes}.
We first used a basic architecture:  a plain
$\beta$-VAE ({\tiny\url{https://github.com/yzwxx/vae-celebA}} \citep{higgins2016beta}.
We ran the full procedure described in Sec.~\ref{sec:deepIM}, comprised of EIM calculations, clustering of channels into modules, and hybridization of generator samples using these modules. Hybridization procedures were performed by intervening on the output of the intermediate convolutional layer (indicated in Supplemental Fig.~\ref{fig:vaeGenStruct}). 
The results are summarized in Supplemental Fig.~\ref{fig:clusterCelebA}. We observed empirically that setting the number of clusters to 3 leads consistently to highly interpretable cluster templates as illustrated in the figure, with one cluster associated to the background, one to the face and one to the hair. This observation was confirmed by running the following cluster stability analysis: we partition at random the influence maps in 3 subsets, and we use this partition to run the clustering twice on two thirds of the data, both runs overlapping only on one third. The obtained clusters were then matched in order to maximize the label consistency (the proportion of influence maps assigned the same label by both runs) on the overlapping subset, and this maximum consistency was used to assess robustness of the clustering across the number of clusters. The consistency results are provided in Supplemental Fig.~\ref{fig:clustConsist} and show 3 clusters is a reasonable choice as consistency is large ($>90\%$) and drops considerably for 4 clusters. Moreover, these results also show that the NMF-based clustering outperforms clustering with the more standard k-means algorithm. In addition, we also assessed the robustness of the clustering by looking at the cosine distance between the templates associated to matching clusters, averaged across clusters. The results, also provided in Supplemental Fig.~\ref{fig:clustConsist}, are consistent with the above analysis with an average cosine similarity of .9 (scalar product between the normalized feature vectors) achieved with 3 clusters (maximum similarity is 1 for perfectly identical templates). Exemplary influence maps shown in Supplemental Fig.~\ref{fig:clusterCelebA} (center panel) reflect also our general observation: some maps may spread over image locations reflecting different clusters. 

Interestingly, applying the hybridization procedure to the resulting 3 modules obtained by clustering leads to a replacement of the features associated to the module we intervene on, as shown in Supplemental Fig.~\ref{fig:clusterCelebA} (center panel), while respecting the overall structure of the image (no discontinuity introduced). For example, on the middle row we see the facial features of the \textit{Original 2} samples are inserted in the \textit{Original 1} image (shown on the left), while preserving the hair. 
%
%

While the $\beta$-VAE is designed for extrinsic disentanglement, further work has shown that it can prove suboptimal with respect to other approaches \citep{chen2018isolating,locatello2018challenging} suggesting further work could investigate whether better extrinsic disentanglement could also favor intrinsic disentanglement. It is however important to investigate intrinsic disentanglement in models for which (extrinsic) disentanglement is not enforced explicitly. This is in particular the case of most GAN-like architectures, who typically outperform VAE-like approaches in terms of sample quality in complex image datasets. Interestingly, the above results could also be reproduced in the official tensorlayer DCGAN
implementation, equipped with a similar architecture ({\tiny\url{https://github.com/tensorlayer/dcgan}}) (see Appendix E). This suggests that our approach can be applied to models that have not been optimized for disentanglement.
After these experiments with basic models, we used a pretrained ({\tiny\url{https://github.com/Heumi/BEGAN-tensorflow}}) Boundary Equilibrium GAN (BEGAN) \citep{berthelot2017began}, which used to set a milestone in visual quality for higher resolution face images. 
The good quality and higher resolution of the generated images combined with the relatively simple generator architecture of BEGAN allows us to test our hypothesis with minimal modifications of the computational graph. Most likely due to the increase in the number of layers, we observed that obtaining counterfactuals with noticeable effects required interventions on channels from the same cluster in two successive layers. The results shown in Fig.~\ref{fig:celeba_began}, obtained by intervening on layers 5 and 6, reveal a clear selective transfer of features from \textit{Original 2} to \textit{Original 1}. As the model was trained on face images cropped with a tighter frame than for the above models, leaving little room for the hair and background, we observe only one module associated to these features (Fig.~\ref{fig:celeba_began}, middle row) showing a clear hair transfer. The remaining two modules are now encoding different aspects of face features: eye contour/mouth/nose for the top row and eyelids/face shape for the bottom row module. We further evaluated the relative quality of the counterfactual images with respect to the original generated images using the Frechet Inception Distance (FID) \citep{heusel2017gans} (Table~\ref{tab:fid} in the appendix), supporting that the hybridization procedure only mildly affects the image quality, in comparison to the original samples.

\subsection{BigGAN on the ImageNet dataset}\label{sec:biggan}
In order to check whether our approach could scale to high resolution generative models, and generalize to complex image datasets containing a variety of objects, we used the BigGAN-deep architecture \citep{brock2018large},  pretrained ({\tiny\url{https://tfhub.dev/deepmind/biggan-deep-256/1}}) on the ImageNet dataset ({\tiny\url{http://www.image-net.org/}}).
 This is a conditional GAN architecture 
 comprising 12 so-called Gblocks, each containing a cascade of 4 convolutional layers (see Appendix C for details). Each Gblock also receives direct input from the latent variables and the class label, and is bypassed by a skip connection. 
 We then checked that we were able to generate hybrids by mixing the features of different classes. As for the case of BEGAN, intervening on two successive layers within a Gblock was more effective to generate counterfactuals (examples are provided for the 7th Gblock). Examples provided in Fig.~\ref{fig:biggan} (cock-ostrich) show that it is possible to generate high quality counterfactuals with modified background while keeping a very similar object in the foreground. In a more challenging situation, with objects of different nature (Koala-teddy bear on the same figure), meaningful combinations of each original samples are still generated: e.g. a teddy bear in a tree (bottom row), or a ``teddy-koala'' merging teddy texture with the color of a koala on a uniform indoor background  with a wooden texture (top row). 
\begin{figure}
	\includegraphics[width=.45\textwidth]{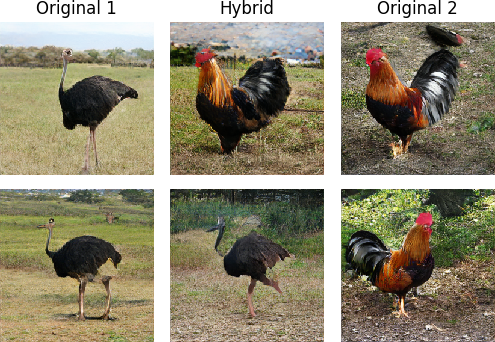}
	\hfill
	\includegraphics[width=.45\textwidth]{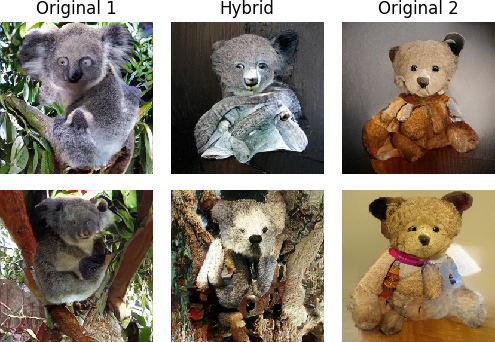}
	\caption{Examples of BigGAN hybridizations across classes. Left: ostrich-cock, right: koala-teddy. See Figs.~\ref{fig:extraBIGGANExples}-\ref{fig:extraBIGGANExplesOstRoost} for additional samples and Fig.~\ref{fig:entropyBIGGAN}-\ref{fig:entropyBIGGANOstRoost} for entropy analysis.\label{fig:biggan}}
\end{figure}
\begin{figure}
	\centering
	\includegraphics[width=.9\textwidth]{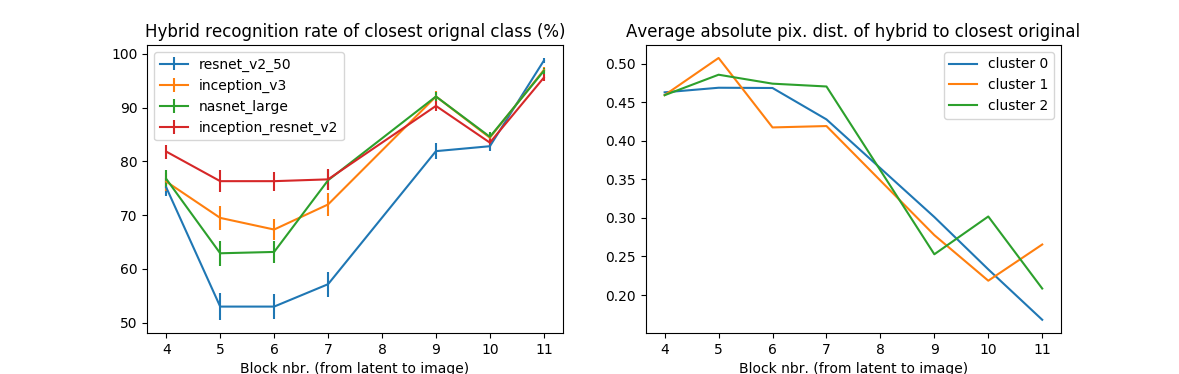}\caption{Analysis of classifier robustness to counterfactual changes (bars indicate standard error).\label{fig:recRate}}
\end{figure}

In order to investigate how the generated counterfactual images can be used to probe and improve the robustness of classifiers to contextual changes, we compared the ability of several SOTA pretrained classifier available on Tensorflow-hub ({\tiny\url{https://tfhub.dev/}}, see Appendix C for details) to recognize one of the original classes. Fig.~\ref{fig:recRate} shows the average recognition rate of the most recognized original class (teddy-bear or koala), as a function of layer depth tends overall to increase. We first observe that high recognition rates are in line with the small pixel distance between hybrids and original when intervening at layers closest to the output (right panel). Interestingly, at intermediate blocks 5-6, there is a clear contrast between classifiers, with the Inception resnet performing better than the others. Interestingly, examples of non-consensual classification results in Supplementary Table~\ref{tab:classification}, together with the associated hybrids (Supplementary Fig.~\ref{fig:classifier_inputs}) suggest different SOTA classifiers rely on different aspects of the image content to take their decision (e.g. background versus object).

\vspace*{-.3cm}
\subsection*{Conclusion}
\vspace*{-.2cm}
We introduced a mathematical definition of \textit{disentanglement}, related  it to the causal notion of counterfactual 
and used it for the unsupervised characterization of the representation encoded by different groups of channels in deep generative architectures. We found evidence for interpretable modules of internal variables in four different generative models trained on two complex real world datasets. Our framework opens a way to a better understanding of complex generative architectures and applications such as the style transfer \citep{gatys2015neural} of controllable properties of generated images at low computational cost (no further optimization is required), and the automated assessment of robustness of object recognition systems to contextual changes. From a broader perspective, this research direction contributes to a better exploitation of deep neural networks obtained by costly and highly energy-consuming training procedures, by (1) enhancing their interpretability and (2) allowing them to be used for tasks their where not trained for. This offers a perspective on how more sustainable research in Artificial Intelligence could be fostered in the future.


\bibliographystyle{icml2019}
{\small 
\bibliography{ganSIC}}

\begin{thebibliography}{37}
\providecommand{\natexlab}[1]{#1}
\providecommand{\url}[1]{\texttt{#1}}
\expandafter\ifx\csname urlstyle\endcsname\relax
  \providecommand{\doi}[1]{doi: #1}\else
  \providecommand{\doi}{doi: \begingroup \urlstyle{rm}\Url}\fi

\bibitem[Anonymous(2018)]{anon18}
Anonymous.
\newblock 2018.

\bibitem[Armstrong(2013)]{armstrong2013basic}
Armstrong, M.~A.
\newblock \emph{Basic topology}.
\newblock Springer Science \& Business Media, 2013.

\bibitem[Bau et~al.(2018)Bau, Zhu, Strobelt, Zhou, Tenenbaum, Freeman, and
  Torralba]{bau2018gan}
Bau, D., Zhu, J.-Y., Strobelt, H., Zhou, B., Tenenbaum, J.~B., Freeman, W.~T.,
  and Torralba, A.
\newblock Gan dissection: Visualizing and understanding generative adversarial
  networks.
\newblock \emph{arXiv preprint arXiv:1811.10597}, 2018.

\bibitem[Bengio et~al.(2013)Bengio, Courville, and
  Vincent]{bengio2013representation}
Bengio, Y., Courville, A., and Vincent, P.
\newblock Representation learning: A review and new perspectives.
\newblock \emph{IEEE transactions on pattern analysis and machine
  intelligence}, 35\penalty0 (8):\penalty0 1798--1828, 2013.

\bibitem[Berthelot et~al.(2017)Berthelot, Schumm, and Metz]{berthelot2017began}
Berthelot, D., Schumm, T., and Metz, L.
\newblock Began: Boundary equilibrium generative adversarial networks.
\newblock \emph{arXiv preprint arXiv:1703.10717}, 2017.

\bibitem[Besserve et~al.(2018)Besserve, Shajarisales, Sch{\"o}lkopf, and
  Janzing]{besserve2018aistats}
Besserve, M., Shajarisales, N., Sch{\"o}lkopf, B., and Janzing, D.
\newblock Group invariance principles for causal generative models.
\newblock In \emph{AISTATS}, 2018.

\bibitem[Brock et~al.(2018)Brock, Donahue, and Simonyan]{brock2018large}
Brock, A., Donahue, J., and Simonyan, K.
\newblock Large scale gan training for high fidelity natural image synthesis.
\newblock \emph{arXiv preprint arXiv:1809.11096}, 2018.

\bibitem[Chen et~al.(2018)Chen, Li, Grosse, and Duvenaud]{chen2018isolating}
Chen, T.~Q., Li, X., Grosse, R.~B., and Duvenaud, D.~K.
\newblock Isolating sources of disentanglement in variational autoencoders.
\newblock In \emph{Advances in Neural Information Processing Systems}, pp.\
  2610--2620, 2018.

\bibitem[Chen et~al.(2016)Chen, Duan, Houthooft, Schulman, Sutskever, and
  Abbeel]{chen2016infogan}
Chen, X., Duan, Y., Houthooft, R., Schulman, J., Sutskever, I., and Abbeel, P.
\newblock Infogan: Interpretable representation learning by information
  maximizing generative adversarial nets.
\newblock In \emph{Advances in Neural Information Processing Systems}, pp.\
  2172--2180, 2016.

\bibitem[Diop et~al.()Diop, Larue, Miron, and Brie]{diop2017post}
Diop, M., Larue, A., Miron, S., and Brie, D.
\newblock A post-nonlinear mixture model approach to binary matrix
  factorization.
\newblock In \emph{2017 25th European Signal Processing Conference (EUSIPCO)},
  pp.\  321--325. IEEE.

\bibitem[Dosovitskiy \& Brox(2016)Dosovitskiy and
  Brox]{dosovitskiy2016inverting}
Dosovitskiy, A. and Brox, T.
\newblock Inverting visual representations with convolutional networks.
\newblock In \emph{Proceedings of the IEEE Conference on Computer Vision and
  Pattern Recognition}, pp.\  4829--4837, 2016.

\bibitem[Dvornik et~al.(2018)Dvornik, Mairal, and Schmid]{dvornik2018modeling}
Dvornik, N., Mairal, J., and Schmid, C.
\newblock Modeling visual context is key to augmenting object detection
  datasets.
\newblock In \emph{Proceedings of the European Conference on Computer Vision
  (ECCV)}, pp.\  364--380, 2018.

\bibitem[Fong \& Vedaldi(2017)Fong and Vedaldi]{fong2017interpretable}
Fong, R.~C. and Vedaldi, A.
\newblock Interpretable explanations of black boxes by meaningful perturbation.
\newblock \emph{arXiv preprint arXiv:1704.03296}, 2017.

\bibitem[Gatys et~al.(2015)Gatys, Ecker, and Bethge]{gatys2015neural}
Gatys, L.~A., Ecker, A.~S., and Bethge, M.
\newblock A neural algorithm of artistic style.
\newblock \emph{arXiv preprint arXiv:1508.06576}, 2015.

\bibitem[Goodfellow et~al.(2014)Goodfellow, Pouget-Abadie, Mirza, Xu,
  Warde-Farley, Ozair, Courville, and Bengio]{goodfellow2014generative}
Goodfellow, I., Pouget-Abadie, J., Mirza, M., Xu, B., Warde-Farley, D., Ozair,
  S., Courville, A., and Bengio, Y.
\newblock Generative adversarial nets.
\newblock In \emph{Advances in neural information processing systems}, pp.\
  2672--2680, 2014.

\bibitem[Grill-Spector \& Malach(2004)Grill-Spector and Malach]{grill2004human}
Grill-Spector, K. and Malach, R.
\newblock The human visual cortex.
\newblock \emph{Annu. Rev. Neurosci.}, 27:\penalty0 649--677, 2004.

\bibitem[Heusel et~al.(2017)Heusel, Ramsauer, Unterthiner, Nessler, Klambauer,
  and Hochreiter]{heusel2017gans}
Heusel, M., Ramsauer, H., Unterthiner, T., Nessler, B., Klambauer, G., and
  Hochreiter, S.
\newblock Gans trained by a two time-scale update rule converge to a nash
  equilibrium.
\newblock \emph{arXiv preprint arXiv:1706.08500}, 2017.

\bibitem[Higgins et~al.(2017)Higgins, Matthey, Pal, Burgess, Glorot, Botvinick,
  Mohamed, and Lerchner]{higgins2016beta}
Higgins, I., Matthey, L., Pal, A., Burgess, C., Glorot, X., Botvinick, M.,
  Mohamed, S., and Lerchner, A.
\newblock beta-vae: Learning basic visual concepts with a constrained
  variational framework.
\newblock In \emph{ICLR 2017}, 2017.

\bibitem[Higgins et~al.(2018)Higgins, Amos, Pfau, Racaniere, Matthey, Rezende,
  and Lerchner]{higgins2018towards}
Higgins, I., Amos, D., Pfau, D., Racaniere, S., Matthey, L., Rezende, D., and
  Lerchner, A.
\newblock Towards a definition of disentangled representations.
\newblock \emph{arXiv preprint arXiv:1812.02230}, 2018.

\bibitem[Imbens \& Rubin(2015)Imbens and Rubin]{imbens2015causal}
Imbens, G.~W. and Rubin, D.~B.
\newblock \emph{Causal inference in statistics, social, and biomedical
  sciences}.
\newblock Cambridge University Press, 2015.

\bibitem[Kingma \& Welling(2013)Kingma and Welling]{kingma2013auto}
Kingma, D.~P. and Welling, M.
\newblock Auto-encoding variational bayes.
\newblock \emph{arXiv preprint arXiv:1312.6114}, 2013.

\bibitem[Kulkarni et~al.(2015)Kulkarni, Whitney, Kohli, and
  Tenenbaum]{kulkarni2015deep}
Kulkarni, T.~D., Whitney, W.~F., Kohli, P., and Tenenbaum, J.
\newblock Deep convolutional inverse graphics network.
\newblock In \emph{NIPS}, pp.\  2539--2547, 2015.

\bibitem[Lee \& Seung(1999)Lee and Seung]{lee1999learning}
Lee, D.~D. and Seung, H.~S.
\newblock Learning the parts of objects by non-negative matrix factorization.
\newblock \emph{Nature}, 401\penalty0 (6755):\penalty0 788--791, 1999.

\bibitem[Liu et~al.(2015)Liu, Luo, Wang, and Tang]{liu2015faceattributes}
Liu, Z., Luo, P., Wang, X., and Tang, X.
\newblock Deep learning face attributes in the wild.
\newblock In \emph{Proceedings of International Conference on Computer Vision
  (ICCV)}, 2015.

\bibitem[Locatello et~al.(2018)Locatello, Bauer, Lucic, Gelly, Sch{\"o}lkopf,
  and Bachem]{locatello2018challenging}
Locatello, F., Bauer, S., Lucic, M., Gelly, S., Sch{\"o}lkopf, B., and Bachem,
  O.
\newblock Challenging common assumptions in the unsupervised learning of
  disentangled representations.
\newblock \emph{arXiv preprint arXiv:1811.12359}, 2018.

\bibitem[Mathieu et~al.(2016)Mathieu, Zhao, Ramesh, Sprechmann, and
  LeCun]{mathieu2016disentangling}
Mathieu, M.~F., Zhao, J.~J., Ramesh, A., Sprechmann, P., and LeCun, Y.
\newblock Disentangling factors of variation in deep representation using
  adversarial training.
\newblock In \emph{Advances in Neural Information Processing Systems}, pp.\
  5041--5049, 2016.

\bibitem[Parascandolo et~al.(2018)Parascandolo, Kilbertus, Rojas-Carulla, and
  Sch{\"o}lkopf]{pmlr-v80-parascandolo18a}
Parascandolo, G., Kilbertus, N., Rojas-Carulla, M., and Sch{\"o}lkopf, B.
\newblock Learning independent causal mechanisms.
\newblock In \emph{ICML}, pp.\  4036--4044, 2018.

\bibitem[Pearl(2009)]{pearl2009causality}
Pearl, J.
\newblock \emph{Causality}.
\newblock Cambridge university press, second edition, 2009.

\bibitem[Pearl(2014)]{pearl2012causal}
Pearl, J.
\newblock The causal foundations of structural equation modeling.
\newblock In Hoyle, R.~H. (ed.), \emph{Handbook of Structural Equation
  Modeling}, chapter~5. Guilford Press, 2014.

\bibitem[Peters et~al.(2017)Peters, Janzing, and Sch\"olkopf]{causality_book}
Peters, J., Janzing, D., and Sch\"olkopf, B.
\newblock \emph{Elements of Causal Inference -- Foundations and Learning
  Algorithms}.
\newblock MIT Press, 2017.

\bibitem[Radford et~al.(2015)Radford, Metz, and
  Chintala]{radford2015unsupervised}
Radford, A., Metz, L., and Chintala, S.
\newblock Unsupervised representation learning with deep convolutional
  generative adversarial networks.
\newblock \emph{arXiv preprint arXiv:1511.06434}, 2015.

\bibitem[Rezende et~al.(2014)Rezende, Mohamed, and
  Wierstra]{rezende2014stochastic}
Rezende, D.~J., Mohamed, S., and Wierstra, D.
\newblock Stochastic backpropagation and approximate inference in deep
  generative models.
\newblock \emph{arXiv preprint arXiv:1401.4082}, 2014.

\bibitem[Sajjadi et~al.(2018)Sajjadi, Bachem, Lucic, Bousquet, and
  Gelly]{sajjadi2018assessing}
Sajjadi, M. S.~M., Bachem, O., Lucic, M., Bousquet, O., and Gelly, S.
\newblock Assessing generative models via precision and recall.
\newblock \emph{arXiv preprint arXiv:1806.00035}, 2018.

\bibitem[Suter et~al.(2018)Suter, Miladinovi{\'c}, Sch{\"o}lkopf, and
  Bauer]{suter2018robustly}
Suter, R., Miladinovi{\'c}, D., Sch{\"o}lkopf, B., and Bauer, S.
\newblock Robustly disentangled causal mechanisms: Validating deep
  representations for interventional robustness.
\newblock \emph{arXiv preprint arXiv:1811.00007}, 2018.

\bibitem[Zeiler \& Fergus(2014)Zeiler and Fergus]{zeiler2014visualizing}
Zeiler, M.~D. and Fergus, R.
\newblock Visualizing and understanding convolutional networks.
\newblock In \emph{European conference on computer vision}, pp.\  818--833.
  Springer, 2014.

\bibitem[Zhang et~al.(2017{\natexlab{a}})Zhang, Cao, Wu, and
  Zhu]{zhang2017growing}
Zhang, Q., Cao, R., Wu, Y., and Zhu, S.
\newblock Growing interpretable part graphs on convnets via multi-shot
  learning.
\newblock In \emph{AAAI}, pp.\  2898--2906, 2017{\natexlab{a}}.

\bibitem[Zhang et~al.(2017{\natexlab{b}})Zhang, Wang, and
  Zhu]{zhang2017examining}
Zhang, Q., Wang, W., and Zhu, S.
\newblock Examining {CNN} representations with respect to dataset bias.
\newblock \emph{arXiv preprint arXiv:1710.10577}, 2017{\natexlab{b}}.

\end{thebibliography}

\onecolumn
\newpage

{\huge \bf Supplementary information}
\section*{Appendix A: Formal definitions and results for causal generative models (Section~2)}
\label{sec:CGM}

\michel{make sure if parents are used in the final definition, they are define (check all relevant terms are transfered from appendix)}

We rely on the assumption that a trained generator architecture can be exploited as a mechanistic model, such that parts of this model can be manipulated independently. A mathematical representation of such models can be given using {\em structural causal models (SCMs}, that rely on structural equations (SEs) of the form
$
Y\coloneqq f(X_1,X_2,\cdots ,X_N,\epsilon)\,,
$
denoting the assignment
of a value to variable $Y$, computed from the values of other variables $X_k$ in the system under consideration, and of putative exogenous influences $\epsilon$, imposed by factors outside the system under study. As in the above equation, we will use uppercase letters to indicate variables being the outcome of a structural assignment, while specific values taken by them will be lower case. SEs stay valid even if right-hand side variables undergo a change due to interventions \citep[e.g.]{pearl2009causality,causality_book}, and can model the operations performed in computational graphs of modern neural network implementations. Such graphs then depict SCMs made of interdependent modules, for which assignments' dependencies are represented by a directed acyclic graph $\mathcal{G}$. Without loss of generality, we introduce a Causal Generative Model (CGM) $M$ capturing the computational relations between a selected subset of variables comprising: (1) the input latent variables $\{z_k\}$, (2) the generator's output $Y$ (typically multi-dimensional), and (3) a collection of possibly multi-dimensional endogenous (internal) variables forming an \textit{intermediate representation} such that the generator's output can be decomposed into two successive steps as $\{Z_k\} \mapsto \{V_k\} \mapsto Y$. In a feed-forward neural network, one $V_k$ may for instance represent one channel of the output of a convolutional layer (e.g. after application of the ReLU non-linearity).

\begin{defn}[Causal Generative Model (CGM)]\label{def:CGM}
	Given $K$ real-valued \emph{latent} variables $\bm{z}=\left(z_k\right)$ taking arbitrary values on domain $\mathcal{Z}=\prod_{k=1}^K\mathcal{Z}_k$, 
	where all $\mathcal{Z}_k$'s are closed intervals, the CGM $M=\mathbb{G}(\mathcal{Z},\textbf{S},\mathcal{G})$ comprises a directed acyclic graph $\mathcal{G}$ and a set $\textbf{S}$ of $N+1$ deterministic continuous structural equations that assign:
	\vspace*{-.3cm}
	\setlist[itemize]{leftmargin=*,itemsep=-.5em}
	\begin{itemize}
		\item $N$ \emph{endogenous} variables  $\left\{V_k\coloneqq f_k(\textbf{Pa}_k)\right\}_{k=1..N}$  taking values in Euclidean spaces $(\mathcal{V}^k)_{k=1..N}$, based on  their endogenous or latent parents $\textbf{Pa}_k$ in $\mathcal{G}$,\footnote{$A$ is a parent of (child) $B$  whenever there is an arrow $A\rightarrow B$. 
		}
		\item one \emph{output} ${ Y\coloneqq f_y(\textbf{Pa}_y)}$ taking values in Euclidean space $\mathcal{Y}$, parents $\textbf{Pa}_y$ in $\mathcal{G}$ being endogenous.
	\end{itemize}\vspace*{-.3cm}
	Moreover, $z_k$'s are the only sources and ${Y}$ is the only sink.\footnote{Sources are parentless nodes, sinks are childless.}
\end{defn}

The graph of an example CGM is exemplified on Fig.~\ref{fig:CGM}, consisting of 3 endogenous variables, 2 latent inputs and the output. This aligns with the definition of a deterministic structural causal model by \citet[chapter 7]{pearl2009causality}, once our latent variables are identified with exogenous ones. CGMs have however specificities reflecting the structure of models encountered in practice. For instance, variable assignments may or may not involve latent/exogenous variables in their right-hand side, which is unusual in causal inference. This allows modeling feed-forward networks consisting in a first layer receiving latent inputs followed by a cascade of deterministic operations in downstream layers.
The above definition guaranties several basic properties found in the computational graph of existing generative networks: (1) all endogenous variables $V_k$ are unambiguously assigned once $\bm z$ is chosen, (2) the output $Y$ is unambiguously assigned once either $\bm z$ is chosen, or, alternatively, if an appropriate subset of $V_k$'s, such as $\textbf{Pa}_y$, is assigned. This allows us to introduce several useful mappings.

In an ideal case, while the support of the latent distribution covers the whole latent space $\mathcal{Z}$, internal variables and outputs typically live on manifolds of smaller dimension than their ambient space. These can be defined as the images\footnote{An image $f[A]$ is the subset of outputs of $f$ for a subset A of input values.} of $\mathcal{Z}$ by operations of the graph: the output image 
$\mathcal{Y}_M=Y[\mathcal{Z}]$, the endogenous images $\mathcal{V}^k_M=V_k[\mathcal{Z}]$ for a single variable, $\bm{\mathcal{V}}^\mathcal{E}_M=\left \{\bm{v}\in \prod_{k\in\mathcal{E}}\mathcal{V}^k\colon \bm{v}=(v_k(\bm{z}))_{k\in\mathcal{E}},\,\bm{z} \in \mathcal{Z}\right\}$ for a subset of variables indexed by $\mathcal{E}$, and $\bm{\mathcal{V}}_M$ when $\mathcal{E}$ includes all endogenous variables.

Functions assigning $Y$ from latent variables and from endogenous variables, respectively, are
\begin{align*}
g_M:
\begin{array}{rcl}
\mathcal{Z} & \to&\mathcal{Y}_M\,,\\
{\bm z}& \mapsto& {Y}({\bm z})\,
\end{array}
\quad \text{and} \quad 
\tilde{g}_M: 
\begin{array}{rcl}
\bm{\mathcal{ V}}_M&\to& \mathcal{Y}_M \,,\\
{\bm v}&\mapsto& {Y}(\bm{v})\,,
\end{array}
\end{align*}
and we call them \textit{latent} and \textit{endogenous mappings}, respectively. Given the typical choice of dimensions for latent and endogenous variables, the $V_k$'s and $Y$ are constrained to take values in subsets of their euclidean ambient space. We will assume 
that $\tilde{g}_M$ and $g_M$ define proper embeddings, in particular implying that they are both invertible. We call a CGM satisfying these assumptions an embedded CGM.

\michel{next two paragraphs are not essential}
With this vocabulary we can for example verify the example of Fig.~\ref{fig:CGM} contains exactly two layers (in green). Note $g_M$ and $\tilde{g}_M$ are well defined because the output can be unambiguously computed from their inputs by successive assignments along $\mathcal{G}$, and are both surjective due to appropriate choices for domains and codomains. All defined image sets ( $\bm{\mathcal{V}}^\ell_M$, $\mathcal{Y}_M$, ...) are constrained by the parameters of $M$, and are typically not easy to characterize. For example $\bm{\mathcal{V}}_M$ is likely \textit{a strict subset} of the Cartesian product $\prod_{k}{\mathcal{V}^{k}_M}$. 

Importantly, the image set $\mathcal{Y}_M$ of a trained model is of particular significance, as it should approximate at best the support of the data distribution we want to model. 
Learning the generator parameters such that $\mathcal{Y}_M$ precisely matches the support of the target data distribution is arguably a major goal for generative models (see e.g. \citet{sajjadi2018assessing}).

As we will manipulate properties of the output, we restrict ourselves to transformations that respect the topology of $\mathcal{Y}_M$, 
and 
use embeddings as the basic structure for it, allowing inversion of $g_M$.
\begin{defn}[Embedded CGMs]\label{def:embed}
	If $f: X\rightarrow Y$ is a continuous injective function with continuous inverse $f^{-1}:f[X]\rightarrow Y$, we call $f$ an \emph{embedding} of $X$ in $Y$.
	We say that a CGM $M$ is \emph{embedded} if $g_M$ and $\tilde{g}_M$ are respective embeddings of $\mathcal{Z}$ and $\bm{\mathcal{V}}_M$ in $\mathcal{Y}$.
\end{defn}
Since Definition~\ref{def:CGM} imposes continuous structural equations,
which is satisfied for all operations in standard generative models,
injectivity of $g_M$ is the key additional requirement for embedded CGMs. 
\begin{prop}\label{prop:embed}
	If $\mathcal{Z}$ of CGM $M$ is compact (all $\mathcal{Z}_k$'s are bounded), then $M$ is embedded if and only if $g_M$ is injective.
\end{prop}
Proof is provided in Appendix B. This implies that generative models based on uniformly distributed latent variables (the case of many GANs), provided they are injective, are embedded CGMs. While VAEs' latent space is typically not compact (due to the use of normally distributed latent variables), we argue that restricting it to a product of compact intervals (covering most of the probability mass) will result in an embedded CGM that approximates the original one for most samples.

Based on this precise framework, we can now provide the formal definitions and results described informally in main text.

The CGM framework allows defining counterfactuals in the network following \citet{pearl2012causal}.
\begin{defn}[Unit level counterfactual]\label{def:inter}
	Given CGM $M$, for a subset of endogenous variables $\mathcal{E}=\{e_1,..,e_n\}$, and assignment ${\boldsymbol{h}}$ of these variables, we define the \emph{interventional CGM} $M_{\boldsymbol{h}}$  obtained by replacing structural assignments for $\mathbf{V_{|\mathcal{E}}}$ by assignments $\{V_{e_k}\coloneqq h_k(\textbf{z})\}_{e_k\in \mathcal{E}}$. Then for a given value ${\bm z}$ of the latent variables, called \emph{unit}, the \emph{unit-level counterfactual} is the output of $M_{\bm h}$:
	$
	Y^{\mathcal{E}}_{\bm h}({\bm z})=g_{M_{\bm h}}({\bm z})\,.
	$
\end{defn}
Definition~\ref{def:inter} is also in line with the concept of \textit{potential outcome} \citep{imbens2015causal}. Importantly, conterfactuals induce a transformation of the output of the generative model.

\begin{repdefn}{def:countmap}[Counterfactual mapping]
	Given an embedded CGM, we call the continuous map
	\[
	\overset{\curvearrowright}{Y}^{\mathcal{E}}_{\bm h}: {y}\mapsto 
	Y^{\mathcal{E}}_{\bm h}\left({g_M^{-1}(y)}\right)
	\]
	the $(\mathcal{E},{\bm h})$-counterfactual mapping.	We say it is \emph{faithful} to $M$ whenever $\overset{\curvearrowright}{Y}^{\mathcal{E}}_{\bm h}\left[ \mathcal{Y}_M \right]\subset\mathcal{Y}_M$.
\end{repdefn}

Our approach then relates counterfactuals to a form of disentanglement allowing transformations of the internal variables of the network as follows.
\begin{repdefn}{def:intdisent}[Intrinsic disentanglement]
	In a CGM $M$, endomorphism $T:\mathcal{Y}_M\rightarrow \mathcal{Y}_M$ is \emph{intrinsically disentangled} with respect to a subset $\mathcal{E}$ of endogenous variables, if
	it exists a transformation $T'$ of endogenous variables such that for any latent ${\bm z}\in \mathcal{Z}$, leading to the tuple of values ${\bm v}\in \boldsymbol{\mathcal{V}}_M$, 
	\begin{equation}\label{eq:intrDisant}
	T(Y({\bm v})) = Y(T'({\bm v}))
	\end{equation}
	where $T'({\bm v})$ only affects the variables indexed by $\mathcal{E}$. 
\end{repdefn}
In this definition, $Y({\bm v})$ corresponds to the unambiguous assignment of $Y$ based on endogenous values.\footnote{Note the mapping ${\bm v}\mapsto Y({\bm v})$ differs from $\tilde{g}_M^\ell$ because its domain is not restricted to the image set $\bm{\mathcal{V}}^\ell_M$.} Fig.~\ref{fig:exampleID} illustrates this second notion of disentanglement, where the split node indicates that the value of $V_2$ is computed as in the original CGM (Fig.~\ref{fig:CGM}) before applying transformation $T^2$ to the outcome. 

Intrinsic disentanglement relates to a causal interpretation of the generative model's structure in the sense that it expresses a form of robustness to perturbation of one of its subsystems. Counterfactuals represent examples of such perturbations, and as such, may be disentangled given their faithfulness.
	\begin{repprop}{prop:faith}[Counterfactuals and disentanglement, formal]
	For an embedded CGM $M$ and continuous assignment $\bm{h}$, the $(\mathcal{E},{\bm h})$-counterfactual mapping $\overset{\curvearrowright}{Y}^{\mathcal{E}}_{\bm h}$ is faithful if and only if it is 
	intrinsically disentangled with respect to subset $\mathcal{E}$. Moreover, if $\bm h[\mathcal{Z}]\subset \bm{\mathcal{V}}^\mathcal{E}_M$, it is sufficient that $\mathcal{E}$ and $\overbar{\mathcal{E}}$ do not have common latent ancestors for $\overset{\curvearrowright}{Y}^{\mathcal{E}}_{\bm h}$ to be faithful.
\end{repprop}
The proof is provided in Appendix B. 

\section*{Appendix B: Additional details for Section 2 and 3}

\subsection*{Topological concepts}
\textbf{Continuity.} We recall the classical definition of continuity of $f:X\to Y$. Given the respective topologies $\tau_X$ and $\tau_Y$ of domain and codomain (the sets of all open sets), $f$ is continuous whenever for all $A\in\tau_Y$, $f^{-1}(A)\in \tau_X$.

\textbf{Euclidean topology.} For defining continuity between Euclidean spaces, we rely on the Euclidean (or standard) topology naturally induced by the metric: as set is open if and only if it contains and open ball around each if its points.

\textbf{Subset topology.} When restricting the domain or codomain of a mapping to a subset A of the Euclidean space, we rely on the subspace topology, that consists in the intersection of A will all open sets.

\subsection*{Proof of Proposition~\ref{prop:embed}}
	Following a result stated in \citet{armstrong2013basic}, since $\mathcal{Z}$ is compact and the codomain of $g_M$ is Hausdorff (because Euclidean), then a continuous (by definition) and injective $g_M$ is an embedding. In addition, $g_M$ injective implies $\tilde{g}^\ell_M$'s are injective on their respective domains $\bm{\mathcal{V}}^\ell_M$.  Moreover, the $\bm{\mathcal{V}}^\ell_M$'s being image of a compact $\mathcal{Z}$ by a continuous mapping (by the CGM definition), they are compact, such that the respective $\tilde{g}^\ell_M$'s are also embeddings.

\subsection*{Proof of Proposition~\ref{prop:faith}}
	\textbf{Part 1: Proof of the equivalence between faithful and disentangled.}
	One conditional is trivial: if a transformation is disentangled, it is by definition an endomorphism of $\mathcal{Y}_M$ so the counterfactual mapping must be faithful.
	 
	For the second conditional, let us assume a faithful $\overset{\curvearrowright}{Y}^\mathcal{E}_{\bm{h}}$ and denote the (unambiguous) map from $\boldsymbol{\mathcal{V}}$ to the output
	\[Y_M:
	\begin{array}{rcl}
	\bm{\mathcal{V}}^\ell & \to & \mathcal{Y}\\
	\bm{v} & \mapsto & Y(\bm{v})
	\end{array}\,.
	\] 
	This map differs from $\tilde{g}_M$ due to its broader domain and codomain, such that it is neither necessarily an injection nor a surjection, but they coincide on the image $\bm{\mathcal{V}}_M$.
	We can first notice (using Definition~\ref{def:inter} and the embedding property) that the  counterfactual mapping can be decomposed as \[\overset{\curvearrowright}{Y}^\mathcal{E}_{\bm{h}}=Y_M \circ T' \circ \left(\tilde{g}_M\right)^{-1}\,, \quad
	\text{where} \quad
	T':
	\begin{array}{rcl}
	\bm{\mathcal{V}}^{\overbar{\mathcal{E} }}\times \bm{\mathcal{V}}^{\mathcal{E} } & \to & 	\bm{\mathcal{V}}^{\overbar{\mathcal{E} }}\times \bm{\mathcal{V}}^{\mathcal{E} }\\
	\left(\tilde{\bm{v}},\,\bm{v}\right) & \mapsto & \left(\tilde{\bm{v}},\,\bm{h}\right)
	\end{array}
	\,.\]
	Since $\overset{\curvearrowright}{Y}^\mathcal{E}_{\bm{h}}$ is faithful,   it is then an endomorphism of it $\mathcal{Y}_M$ (continuity comes form the composition of continuous functions), as required by the definition of disentanglement. 
	
		For any $\bm{v}\in \bm{\mathcal{V}}$, consider then the quantity
	\[
\overset{\curvearrowright}{Y}^\mathcal{E}_{\bm{h}}(Y (\bm{v}))=	\overset{\curvearrowright}{Y}^\mathcal{E}_{\bm{h}}\circ \tilde{g}_M (\bm{v})\,,
	\]
	using the above decomposition, we can rewrite it as
	\[
\overset{\curvearrowright}{Y}^\mathcal{E}_{\bm{h}}(Y (\bm{v}))=	Y_M \circ T' \circ \left(\tilde{g}_M\right)^{-1} \circ \tilde{g}_M (\bm{v})=Y_M \circ T'(\bm{v}) \,,
\]	
where $T'$ is a transformation that only affects endogenous variables in $\mathcal{E}$, demonstrating that $\overset{\curvearrowright}{Y}^\mathcal{E}_{\bm{h}}(Y (\bm{v}))$ is disentangled with respect to $\mathcal{E}$.
%
%
%

\textbf{Part 2: Sufficient condition.} 
This is a direct application of the following Proposition~\ref{prop:noancestor} after observing that in our case, the endomorphism $T^\mathbb{E}$ required in this proposition is the constant function with value ${\bm{h}}$.

\begin{prop}\label{prop:noancestor}
	For embedded CGM $M$, if a subset $\mathcal{E}$ of endogenous variables does not share common latent ancestors\footnote{$A$ is an ancestor of $B$ whenever there is a directed path $A\rightarrow .. \rightarrow B$ in the graph} with the reminder of endogenous variables $\overbar{\mathcal{E}}$, then any endomorphism $T^\mathcal{E}:\bm{\mathcal{V}}^\mathcal{E}_M\rightarrow \bm{\mathcal{V}}^\mathcal{E}_M$ leads to a transformation \[
	T:{y}\mapsto\tilde{g}_M\circ T'\circ(\tilde{g}_M)^{-1}({y})\,,\]
	\[ \text{with}\quad
	T' \colon \begin{array}{rcl}
	\bm{\mathcal{V}}^{\overbar{\mathcal{E}}}_M\times\bm{\mathcal{V}}^\mathcal{E}_M &\to& \bm{\mathcal{V}}^{\overbar{\mathcal{E}}}_M\times\bm{\mathcal{V}}^\mathcal{E}_M
	\,,\\
	(\tilde{\bm{v}},\,\bm{v}) &\mapsto& (\tilde{\bm{v}},\,T^\mathcal{E}(\bm{v}))\,,
	\end{array}
	\]
	such that $T$ is disentangled with respect to $\mathcal{E}$ in $M$.
\end{prop}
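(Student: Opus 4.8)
The plan is to reduce the statement to Proposition~\ref{prop:fullDisent} by showing that the map $T'$ exhibited in the statement is a continuous endomorphism of the \emph{actual} layer image set $\bm{\mathcal{V}}^\ell_M$, not merely of the ambient product $\bm{\mathcal{V}}^{\ell\setminus\mathcal{E}}_M\times\bm{\mathcal{V}}^\mathcal{E}_M$. Once this is established, $T=\tilde{g}^\ell_M\circ T'\circ(\tilde{g}^\ell_M)^{-1}$ is well defined and, by Proposition~\ref{prop:fullDisent}, an endomorphism of $\mathcal{Y}_M$ disentangled with respect to $\ell$; since $T'$ leaves the $\ell\setminus\mathcal{E}$ coordinates untouched, the disentanglement is in fact with respect to $\mathcal{E}$. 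The entire difficulty is therefore concentrated in a single structural claim: under the no-common-ancestor hypothesis the image set factorizes exactly as a Cartesian product,
\[
\bm{\mathcal{V}}^\ell_M=\bm{\mathcal{V}}^{\ell\setminus\mathcal{E}}_M\times\bm{\mathcal{V}}^\mathcal{E}_M\,.
\]
Recall this equality is precisely what fails in general, as noted after the image sets were introduced.

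To prove the factorization I would first partition the latent variables. Because $\ell$ is a layer, every directed path from a latent source to the unique sink $Y$ meets $\ell$; since $\mathcal{G}$ is a DAG in which every node reaches $Y$, each latent $z_j$ is consequently an ancestor of at least one node of $\ell$, hence of a node in $\mathcal{E}$ or in $\ell\setminus\mathcal{E}$. The hypothesis that $\mathcal{E}$ and $\ell\setminus\mathcal{E}$ share no common latent ancestor then makes these two possibilities mutually exclusive, so the latent index set splits disjointly into $A_\mathcal{E}$ (latents ancestral to some node of $\mathcal{E}$) and $A_{\ell\setminus\mathcal{E}}$ (latents ancestral to some node of $\ell\setminus\mathcal{E}$), with $A_\mathcal{E}\cup A_{\ell\setminus\mathcal{E}}$ equal to the whole set of latent indices.

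The main step is then the gluing argument. By induction along $\mathcal{G}$, the value assigned to any endogenous variable is a function of its latent ancestors only, so $\bm{V}_{|\mathcal{E}}$ depends on $\bm{z}$ through its coordinates in $A_\mathcal{E}$ alone, and $\bm{V}_{|\ell\setminus\mathcal{E}}$ through those in $A_{\ell\setminus\mathcal{E}}$ alone. Given arbitrary $\tilde{\bm{v}}\in\bm{\mathcal{V}}^{\ell\setminus\mathcal{E}}_M$ and $\bm{v}\in\bm{\mathcal{V}}^\mathcal{E}_M$, realized by latents $\bm{z}^{(1)}$ and $\bm{z}^{(2)}$ respectively, I would build $\bm{z}^\star$ by taking its $A_{\ell\setminus\mathcal{E}}$-coordinates from $\bm{z}^{(1)}$ and its $A_\mathcal{E}$-coordinates from $\bm{z}^{(2)}$; this is a legitimate element of $\mathcal{Z}$ precisely because $\mathcal{Z}=\prod_k\mathcal{Z}_k$ is a full product, so coordinates may be chosen independently. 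The dependence statement then gives $\bm{V}_{|\ell\setminus\mathcal{E}}(\bm{z}^\star)=\tilde{\bm{v}}$ and $\bm{V}_{|\mathcal{E}}(\bm{z}^\star)=\bm{v}$, whence $(\tilde{\bm{v}},\bm{v})\in\bm{\mathcal{V}}^\ell_M$. The reverse inclusion being immediate by projection, the Cartesian product equality follows.

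With the factorization in hand, the endomorphism property of $T'$ is immediate: for $(\tilde{\bm{v}},\bm{v})\in\bm{\mathcal{V}}^\ell_M=\bm{\mathcal{V}}^{\ell\setminus\mathcal{E}}_M\times\bm{\mathcal{V}}^\mathcal{E}_M$ we have $T'(\tilde{\bm{v}},\bm{v})=(\tilde{\bm{v}},T^\mathcal{E}(\bm{v}))$, and $T^\mathcal{E}(\bm{v})\in\bm{\mathcal{V}}^\mathcal{E}_M$ since $T^\mathcal{E}$ is an endomorphism, so the image lies back in the product, i.e.\ in $\bm{\mathcal{V}}^\ell_M$; continuity of $T'$ follows from that of $T^\mathcal{E}$ and of the identity. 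Proposition~\ref{prop:fullDisent} then closes the argument, and I would finish by checking explicitly that $T(Y(\bm{v}))=Y(T'(\bm{v}))$ holds on $\bm{\mathcal{V}}^\ell_M$, so that the definition of intrinsic disentanglement with respect to $\mathcal{E}$ is met. I expect the gluing step — in particular the justification that node values depend only on latent ancestors, combined with the independent-coordinate choice afforded by the product domain $\mathcal{Z}$ — to be the genuine obstacle, since everything else is formal composition of continuous maps.
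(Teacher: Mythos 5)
Your proposal is correct and follows essentially the same route as the paper: the paper's proof likewise observes that the no-common-ancestor condition makes $\bm{V}_{|\ell\setminus\mathcal{E}}$ and $\bm{V}_{|\mathcal{E}}$ functions of disjoint subsets of latent variables, deduces the factorization $\bm{\mathcal{V}}^\ell_M=\bm{\mathcal{V}}^{\ell\setminus\mathcal{E}}_M\times \bm{\mathcal{V}}^{\mathcal{E}}_M$, and concludes that $T'$ is an endomorphism of $\bm{\mathcal{V}}^\ell_M$ so that $T$ is well defined. Your write-up merely makes explicit the gluing construction of $\bm{z}^\star$ (using the product structure of $\mathcal{Z}$) that the paper leaves implicit when asserting the Cartesian-product equality.
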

\begin{proof}[Proof of Proposition~\ref{prop:noancestor}]
The absence of common latent ancestor between $\overbar{\mathcal{E} }$ and $\mathcal{E}$ ensures that values in both subsets are unambiguously assigned by non-overlapping subsets of latent variables, $A$ and $B$ respectively, such that we can write
\[
\left(\bm{V}_{|\overbar{\mathcal{E}}},\,\bm{V}_{|\mathcal{E}}\right) = \left(f_A(\bm{z}_A),\,f_B(\bm{z}_B)\right)
\] 
This implies that the image set of this layer fully covers the Cartesian product of the image sets of the two subsets of variables, i.e. $\bm{\mathcal{V}}_M=\bm{\mathcal{V}}^{\overbar{\mathcal{E}}}_M\times \bm{\mathcal{V}}^{\mathcal{E}}_M$, and guaranties that $T'$ is and endomorphism of $\bm{\mathcal{V}}^\ell_M$ for any choice of endomorphism $T^\mathcal{E}$. This further implies $T$ is well defined and an endomorphism.
\end{proof}
\subsection*{Proof sketch for Proposition~\ref{prop:towlayid}}
	Due to the i.i.d. assumption for components of $\boldsymbol{Z}$ and the structure following the sufficient condition of Prop.~\ref{prop:faith}, it is clear that the subsets of endogenous variables associated to each $\boldsymbol{V}_k$ are modular and the associated partition of the hidden layer is a disentangled representation. The choice of increasing dimensions as well as the i.i.d. sampling of the model parameters from a distribution with a density make sure the resulting mapping is injective (and hence follow the embedded CGM assumptions of Def.~\ref{def:embed}) and that counterfactual hybridization of any component of $\boldsymbol{V}_k$ will result in an influence map whose support covers exactly $I_k$. Finally, the conditions on the $I_k$'s and the thresholding approach guaranties a rank $K$ binary factorization of the matrix $B$, with one factor gathering the indicator vectors associated to each $\boldsymbol{V}_k$ and the uniqueness of this factorization is guaranteed by classical NMF identifiability results, e.g. following \citep{diop2017post}[Theorem III,1].

\michel{ADD proof of prop 2 here}

\section*{Appendix C: Architecture details}

\subsection*{Vanilla $\beta$-VAE and DCGAN}

The $\beta$-VAE architecture is presented in Supplementary Fig.~\ref{fig:vaeGenStruct} and is very similar to the DCGAN architecture. Hyperparameters for both structures are specified
in Table~\ref{tab:netparams}.

%
\subsection*{BEGAN CelebA}
We used the method proposed in~\cite{berthelot2017began} for CelebA dataset. We used the pre-trained model with the same architecture as was used in the paper. It consists of three blocks of convolutional layers each followed by an upsampling layer. The filter size of convolutional layers is $(3,3)$ all over the generator. There is also skip connections in the model that is argued to increase the sharpness of images. Consult~\cite[Figure 1]{berthelot2017began} for architectural details.

\subsection*{BigGAN-deep-256 architecture details}
The pretrained model is taken from \textit{Tensorflow-hub} ({\tiny\url{https://tfhub.dev/}}, we summarize below the main aspects of the architectures.
We used the BigGan-deep architecture of~\cite{brock2018large} as a pre-trained model on 256x256 ImageNet. We did not retrain the model. The architecture consists of several ResBlocks which are the building block of the generator. Each ResBlock contains BatchNorm-ReLU-Conv Layers followed by upsampling transformations and augmented with skip connections that bring fresh signal from the input to every ResBlock. Consult~\cite{brock2018large} for architectural details.

\subsection*{Classifiers architecture details}
All pretrained models are taken from \textit{Tensorflow-hub} ({\tiny\url{https://tfhub.dev/}}, we summarize below the main aspects of the architectures.
\subsubsection*{Inception\_ResNet\_V2}

Inception ResNet V2 is a neural network architecture for image classification, originally published by
\begin{quote}
	Christian Szegedy, Sergey Ioffe, Vincent Vanhoucke, Alex Alemi: "Inception-v4, Inception-ResNet and the Impact of Residual Connections on Learning", 2016.
\end{quote}

\subsubsection*{Inception\_V3}
Inception V3 is a neural network architecture for image classification, originally published by
\begin{quote}
	Christian Szegedy, Vincent Vanhoucke, Sergey Ioffe, Jonathon Shlens, Zbigniew Wojna: "Rethinking the Inception Architecture for Computer Vision", 2015.
\end{quote}

\subsubsection*{Nasnet\_large}
NASNet-A is a family of convolutional neural networks for image classification. The architecture of its convolutional cells (or layers) has been found by Neural Architecture Search (NAS). NAS and NASNet were originally published by
\begin{quote}
	Barret Zoph, Quoc V. Le: "Neural Architecture Search with Reinforcement Learning", 2017.
	Barret Zoph, Vijay Vasudevan, Jonathon Shlens, Quoc V. Le: "Learning Transferable Architectures for Scalable Image Recognition", 2017.
\end{quote}

NASNets come in various sizes. This TF-Hub module uses the TF-Slim implementation nasnet\_large of NASNet-A for ImageNet that uses 18 Normal Cells, starting with 168 convolutional filters (after the "ImageNet stem"). It has an input size of 331x331 pixels.

\subsubsection*{Resnet\_V2\_50}
ResNet V2 is a family of network architectures for image classification with a variable number of layers. It builds on the ResNet architecture originally published by
\begin{quote}
	Kaiming He, Xiangyu Zhang, Shaoqing Ren, Jian Sun: "Deep Residual Learning for Image Recognition", 2015.
\end{quote}

The full preactivation 'V2' variant of ResNet used in this module was introduced by
\begin{quote}
	Kaiming He, Xiangyu Zhang, Shaoqing Ren, Jian Sun: "Identity Mappings in Deep Residual Networks", 2016.
\end{quote}

\section*{Appendix D: Additional files.}
	The file influence.py provided at the link \url{https://www.dropbox.com/sh/4qnjictmh4a2soq/AAAa5brzPDlt69QOc9n2K4uOa?dl=0} contains key elements of the code for counterfactual analysis of generative models.

\section*{Appendix E: Additional Results}
\subsection*{Influence map clustering and hybridization in DCGANs}
We replicated the above approach for GANs on the CelebA dataset. The result shown in Supplemental Fig.~\ref{fig:clusterGANCelebA} summarize the main differences. First, the use of three clusters seemed again optimal according to the stability of the obtained cluster templates. However, we observed that the eyes and mouth location were associated with the top of the head in one cluster, while the rest of the face and the sides of the image (including hair and background) respectively form the two remaining clusters. In this sense, the GAN clusters are less aligned with high level concepts reflecting the causal structure of these images. However, such clustering still allows a good visual quality of hybrid samples.

\section*{Appendix F: Additional Discussion}
\subsection*{Extension and relation to other frameworks}
While we defined disentanglement of transformations, this concept  has been classically attributed to a representation, or factors of variation. Focusing on a transformation aligns to our aim of providing an agnostic definition, in the sense that it only relies on the generator architecture and a given transformation, but neither on data, nor on properties that are not stated explicitly in the definition. In contrast, disentangled representation is usually understood as intervening on meaningful/interpretable features in the image, which may be subjective, or at least referring to some external knowledge. 
Alternatively, one might expect that what should be disentangled is a property or a factor of variation (say ``hair color''). We argue that we can state a property is disentangled by the generalizing our notion of disentanglement to a family of transformations as follows.

\begin{defn}[Disentangled family]\label{def:famdisent}
	A family of transformations (possibly parametric) is disentangled if all members are disentangled with respect to the same $\mathcal{E}$.
	\end{defn}
	Then a property may be disentangled if the class of all transformations changing ``only'' the value of this specific property is disentangled according to Definition~\ref{def:famdisent}.
	Finally, one might also expect that several transformations (or families of transformations) should be disentangled with respect to each other, e.g. allowing to state that hair color should be disentangled from hair length. This  \textit{relative} disentanglement is easily defined based on our original definition.
	
	\begin{defn}[Relative disentanglement]\label{def:reldisent}
		The families of functions $(\mathcal{F}_i)_{i\in 1 .. N}$ are called jointly disentangled whenever they are disentangled with respect to non-overlapping subsets of variables $(\mathcal{E}_i)_{i\in 1 .. N}$ within a given layer.\footnote{or within the latent space for extrinsic disentanglement}
		\end{defn}
		We argue that the notion introduced in \cite{higgins2018towards} can be framed as a special case of Definition~\ref{def:reldisent}.

\section*{Supplementary figures and tables}

\begin{table}[h]
	\resizebox{\textwidth}{!}{
		\begin{tabular}{|c|c|c|c|c|}
			
			Architecture & VAE CelebA & GAN CelebA  \\
			\hline
			Nb. of deconv. layers/channels of generator & 4/(64,64,32,16,3) & 4/(128,64,32,16,3)  \\
			Size of activation maps of generator & (8,16,32,64) & (4,8,16,32) \\
			Latent space & 128 & 150 \\
			Optimization algorithm & Adam ($\beta=0.5$) & Adam ($\beta=0.5$) \\
			Minimized objective & VAE loss (Gaussian posteriors) & GAN loss \\
			batch size & 64 & 64  \\
			Beta parameter & 0.0005 & NA 
		\end{tabular}}
		\caption{\label{tab:netparams}}
	\end{table}

\begin{figure*}[h]
	\centering
		\includegraphics[width=.5\textwidth]{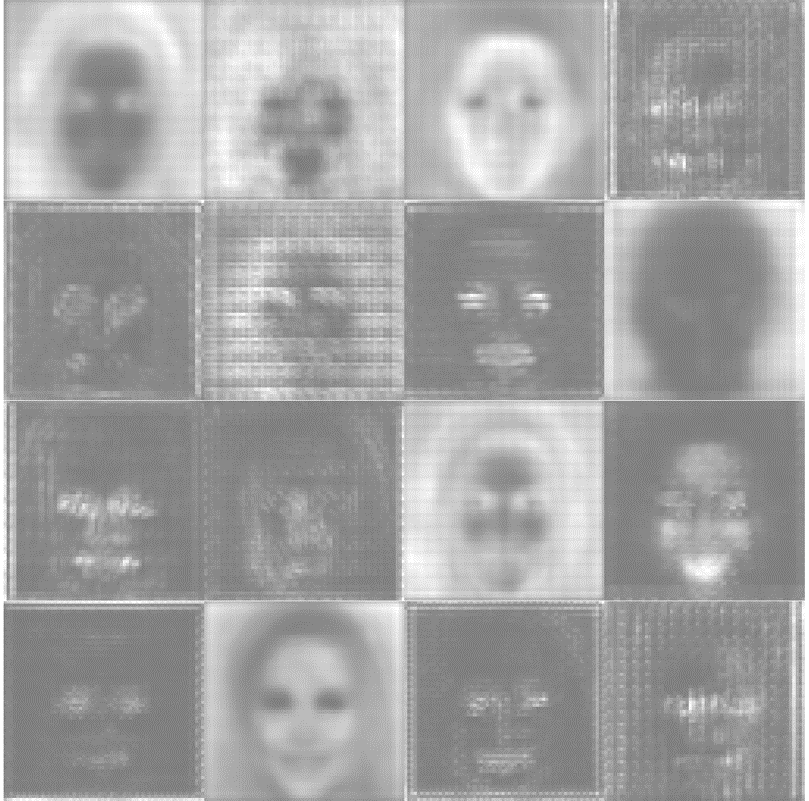}
	\caption{\textbf{Generation of influence maps.}  Example of influence maps generated by a VAE on the CelebA dataset (lighter pixel indicate larger variance and thus stronger influence of the perturbations on that pixel).\label{fig:influenceExples}}
\end{figure*}

\begin{table}[h]
	\caption{FID analysis of BEGAN hybrids. Distance between different pairs of classes (R: Real data, G: Generated data, Ck: Hybrids by intervention on cluster k). The distances are is computed for balanced number of examples ($10k$) for each class and normalized by the FID between the real data and the generated data.  It can be seen in the table that Hybrids have a small distance to the generated and also to each other. This can be interpreted as closeness of the distribution of Hybrids to that of generated data suggesting that Hybridization produces visually plausible images.}
	\label{tab:fid}
	\centering
	\begin{tabular}{c|l|l|lll}
		\cline{1-3}
		\multicolumn{1}{|c|}{G} & $1$ & $0$ &  &  &  \\ \cline{1-4}
		\multicolumn{1}{|c|}{C0} & $1.022$ & $0.036$  & \multicolumn{1}{l|}{$0$} &  &  \\ \cline{1-5}
		\multicolumn{1}{|c|}{C1} & $1.143$ & $0.093$  & \multicolumn{1}{l|}{$0.096$} & \multicolumn{1}{l|}{$0$} &  \\ \hline
		\multicolumn{1}{|c|}{C2} & $1.158$  & $0.097$  & \multicolumn{1}{l|}{$0.099$} & \multicolumn{1}{l|}{$0.045$} & \multicolumn{1}{l|}{0} \\ \hline
		\multicolumn{1}{l|}{} & \multicolumn{1}{c|}{R} & \multicolumn{1}{c|}{G} & \multicolumn{1}{c|}{C0} & \multicolumn{1}{c|}{C1} & \multicolumn{1}{c|}{C2} \\ \cline{2-6} 
	\end{tabular}
\end{table}

\begin{figure}[h]
	\centering
	\includestandalone[width=.7\linewidth]{figures/vaeSchema}
	\caption{\textbf{VAE architecture.} 
		FC indicates a fully connected layer, $z$ is a 100-dimensional isotropic
		Gaussian vector, horizontal dimensions indicate the number of channels of	
		each layer. The output image size is $64\times 64$ (or $32\times 32$ for cifar10) pixels and these dimensions drop by a factor 2 from layer to layer. (reproduced from \citep{radford2015unsupervised}.
		\label{fig:vaeGenStruct}}
\end{figure}

\begin{figure*}
	\hspace*{-.8cm}
	\includegraphics[width=.75\textwidth]{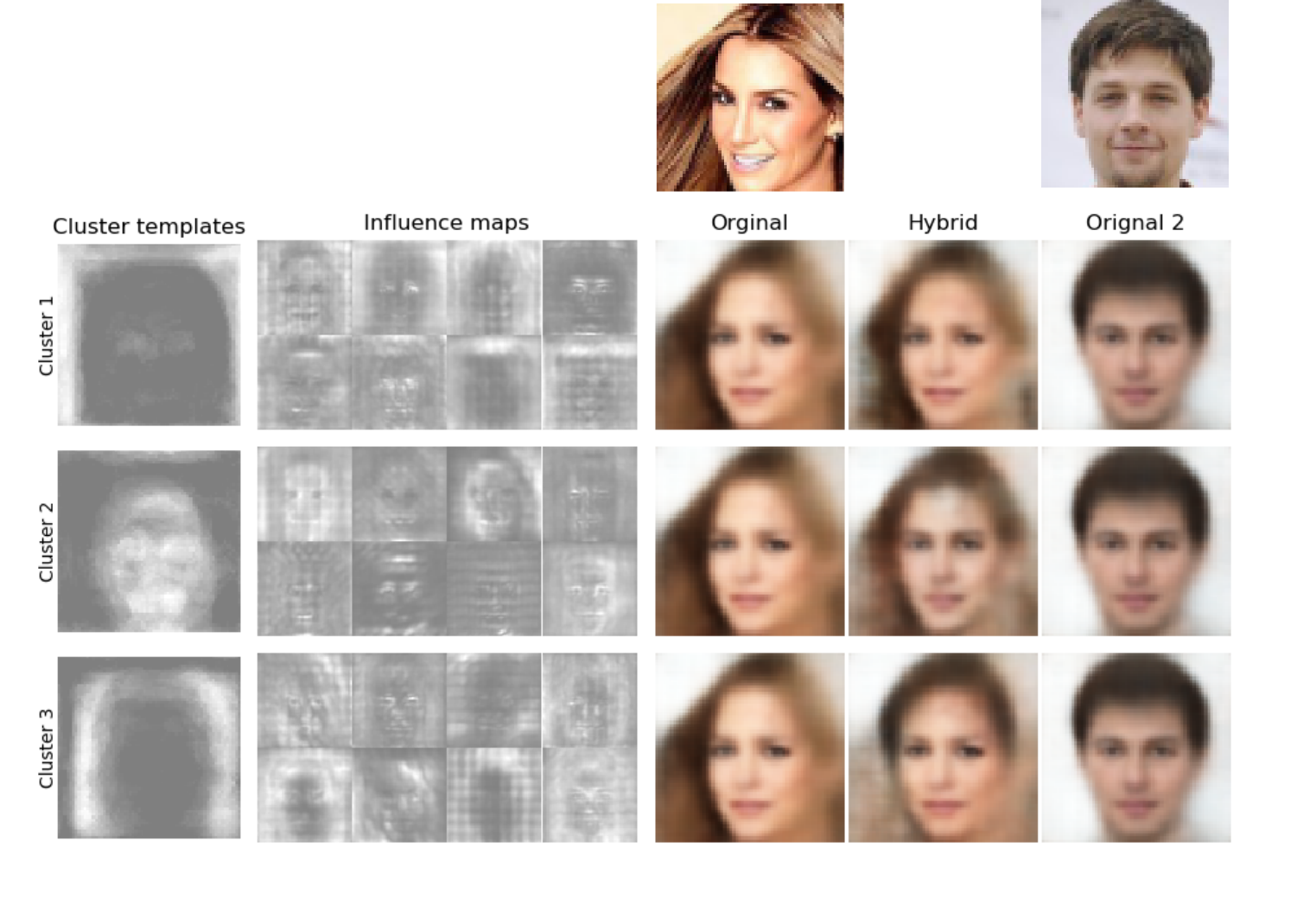}
	\hspace*{-.7cm}\includegraphics[width=.34\textwidth]{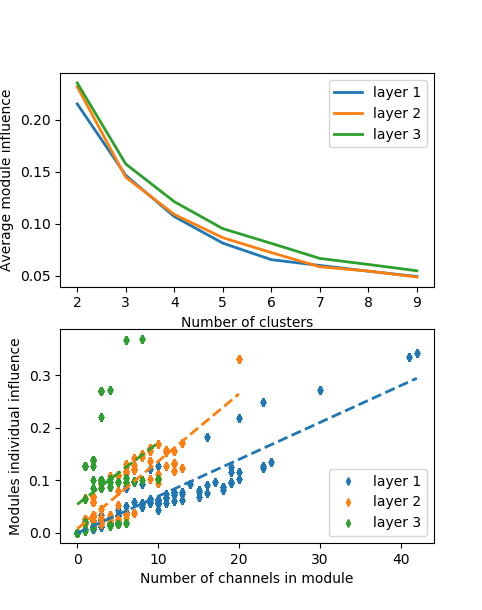}
	\caption{Left: Clustering of influence maps for a VAE trained on the CelebA dataset (see text).\label{fig:clusterCelebA}. Center: samples of the hybridization procedure using as module all channels of the intermediate layer belonging to the cluster of corresponding row, top insets indicate the original ImageNet pictures use to produce both samples by feeding them to the VAE's encoder. Right: magnitude of causal effects. (top: average influence of modules derived from clustering as a function of the number of clusters; bottom: individual influence of each modules, as a function of the number of channels they contain, dashed line indicate linear regression).\label{fig:influenceStats}}
\end{figure*}

\begin{figure}[h]
	\centering
	\includegraphics[width=.49\textwidth]{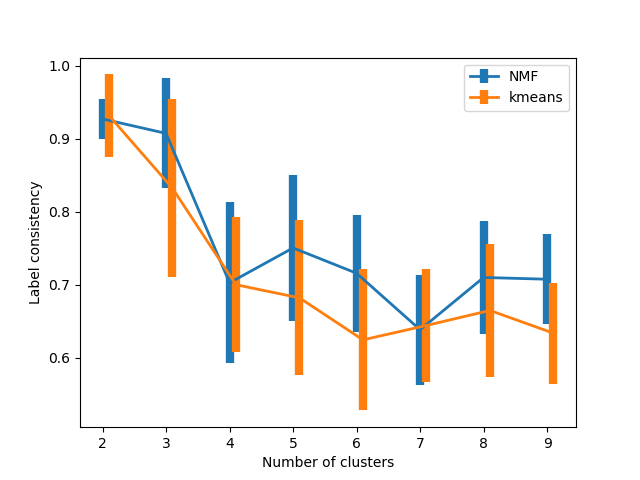}
	\includegraphics[width=.49\textwidth]{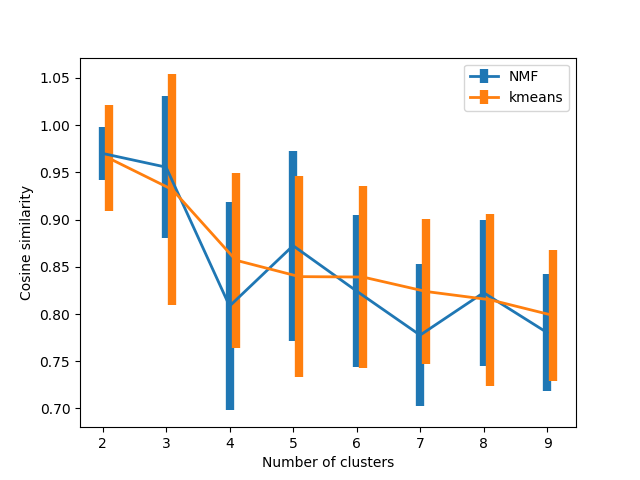}
	
	\caption{Label consistency (left) and cosine similarity (right) of the clustering of influence maps for the NMF and k-means algorithm. Errorbars indicate standard deviation across 20 repetitions.\label{fig:clustConsist}}
\end{figure}

\begin{figure}[h]
	\includegraphics[width=\textwidth]{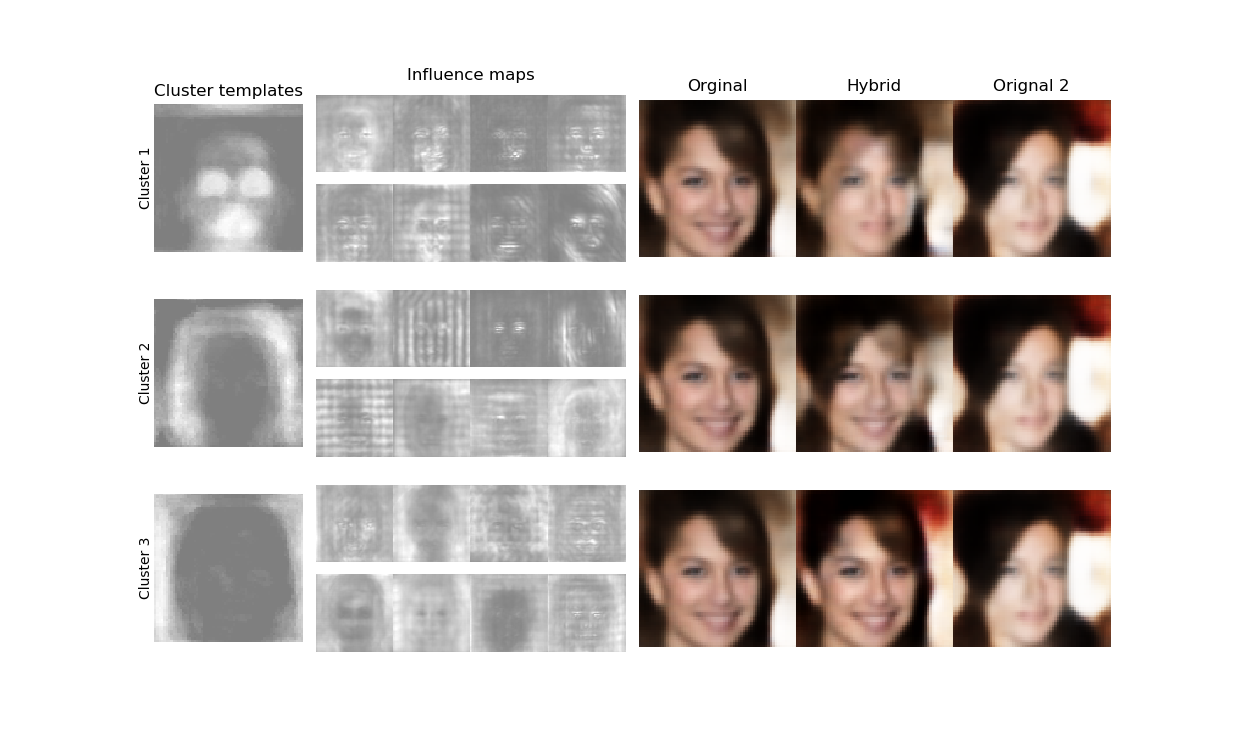}
	\caption{Clustering of influence maps and generation of hybrid samples for a VAE trained on the CelebA dataset (see text).\label{fig:clusterCelebAbis}}
\end{figure}

\begin{figure}[h]
	\includegraphics[width=\textwidth]{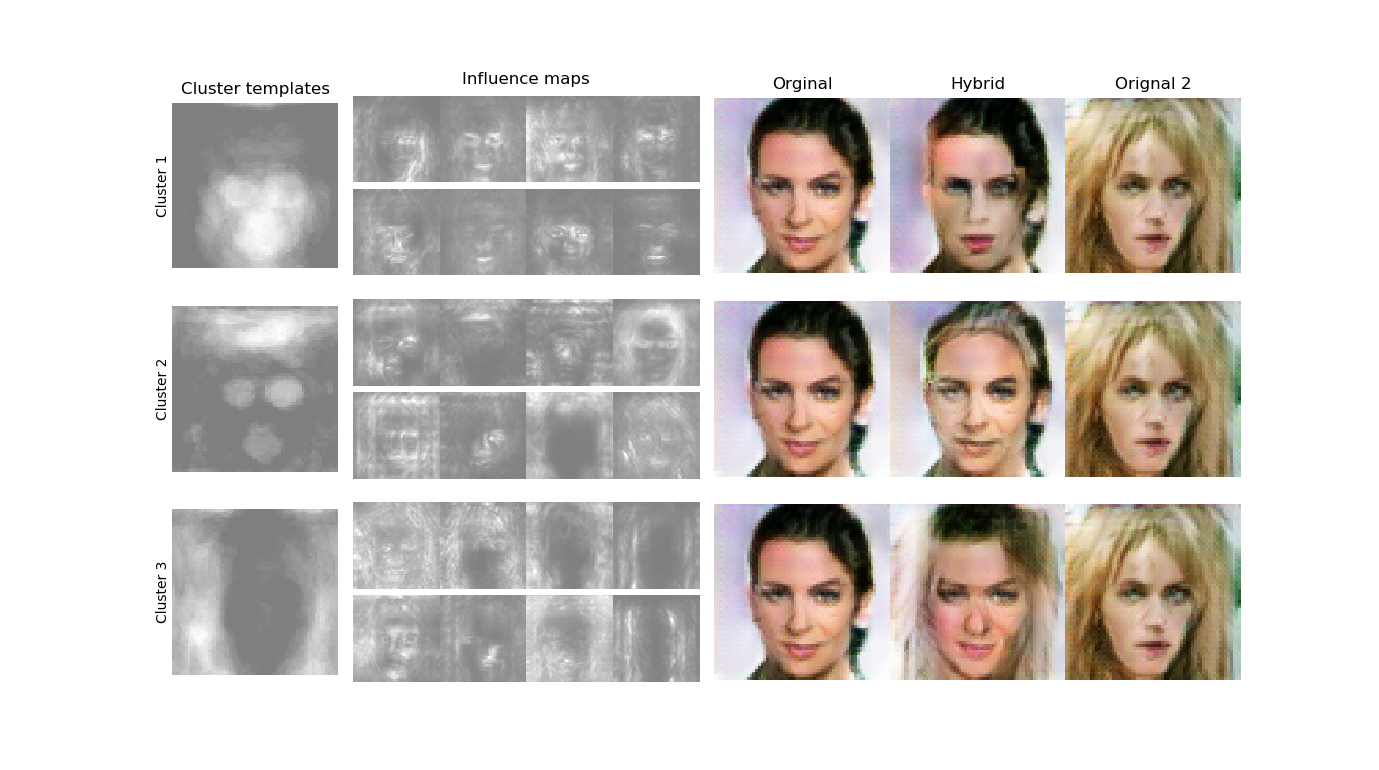}
	\caption{Clustering of influence maps and generation of hybrid samples for a GAN trained on the CelebA dataset (see text).\label{fig:clusterGANCelebA}}
\end{figure}

\begin{figure*}[h]
	\centering
	\includegraphics[width=\textwidth]{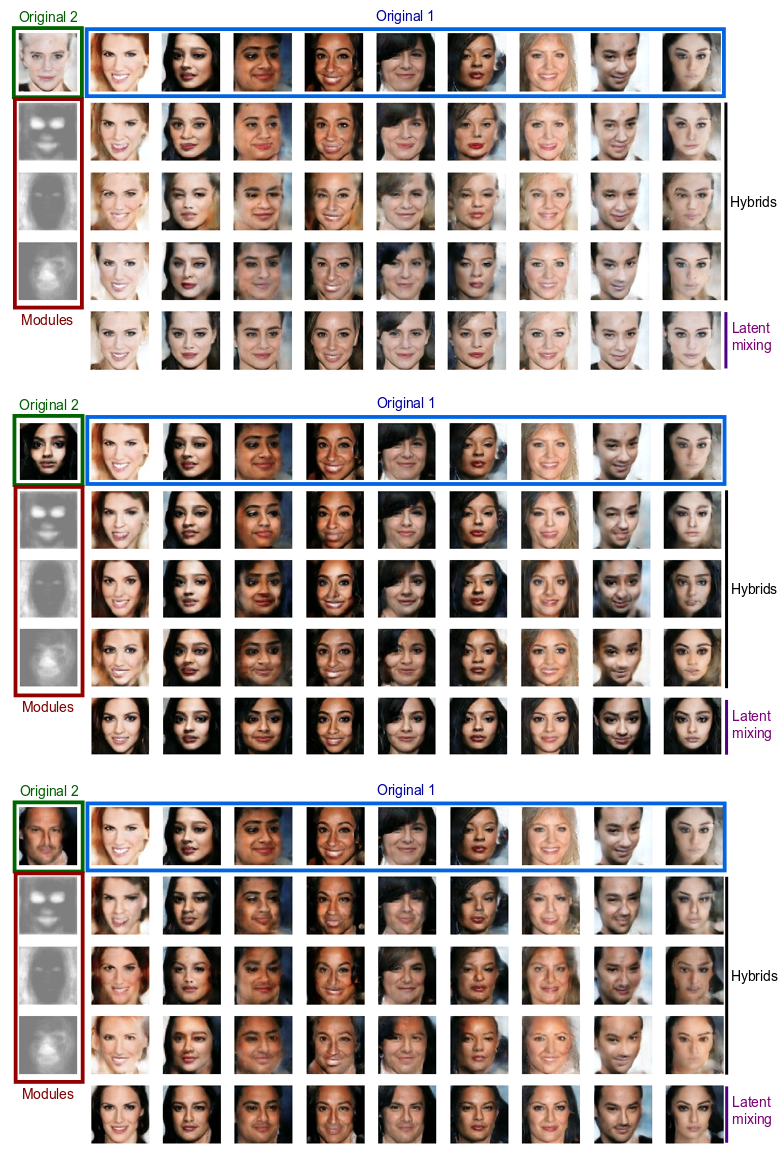}
	\caption{Larger collection of hybrids for the BEGAN. Modules are fixed and extracted by the NMF algorithm using 3 clusters. For each three panel, the "Original 2" sample used to perform the intervention is fixed (show in the top-left corner) while Original 1 varies along the columns.\label{fig:extraBEGANExples}}
\end{figure*}

\begin{figure*}[h]
	\centering
	\vspace*{-1.3cm}
	\makebox[\textwidth][c]{\includegraphics[width=1.5\textwidth]{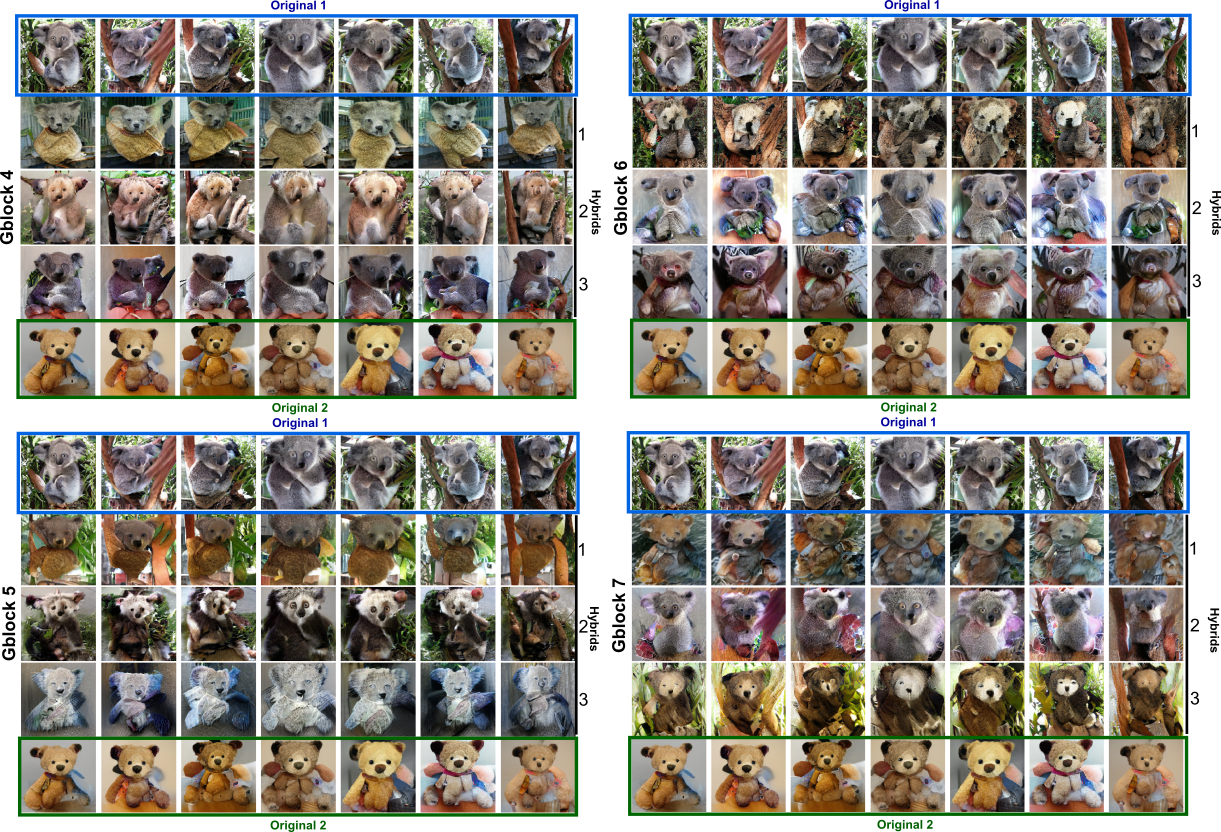}}
	\caption{Larger collection of hybrids for the BIGAN, between classes "teddy" and "koala". Each panel corresponds to intervening on a different Gblock (from 4 to 7). Modules are fixed and extracted by the NMF algorithm using 3 clusters. Each row of hybrids corresponds to interventions on one of the extracted module, numbered on the right-hand side.  \label{fig:extraBIGGANExples}}
\end{figure*}
\begin{figure*}[h]
	\centering
	\includegraphics[width=.9\textwidth]{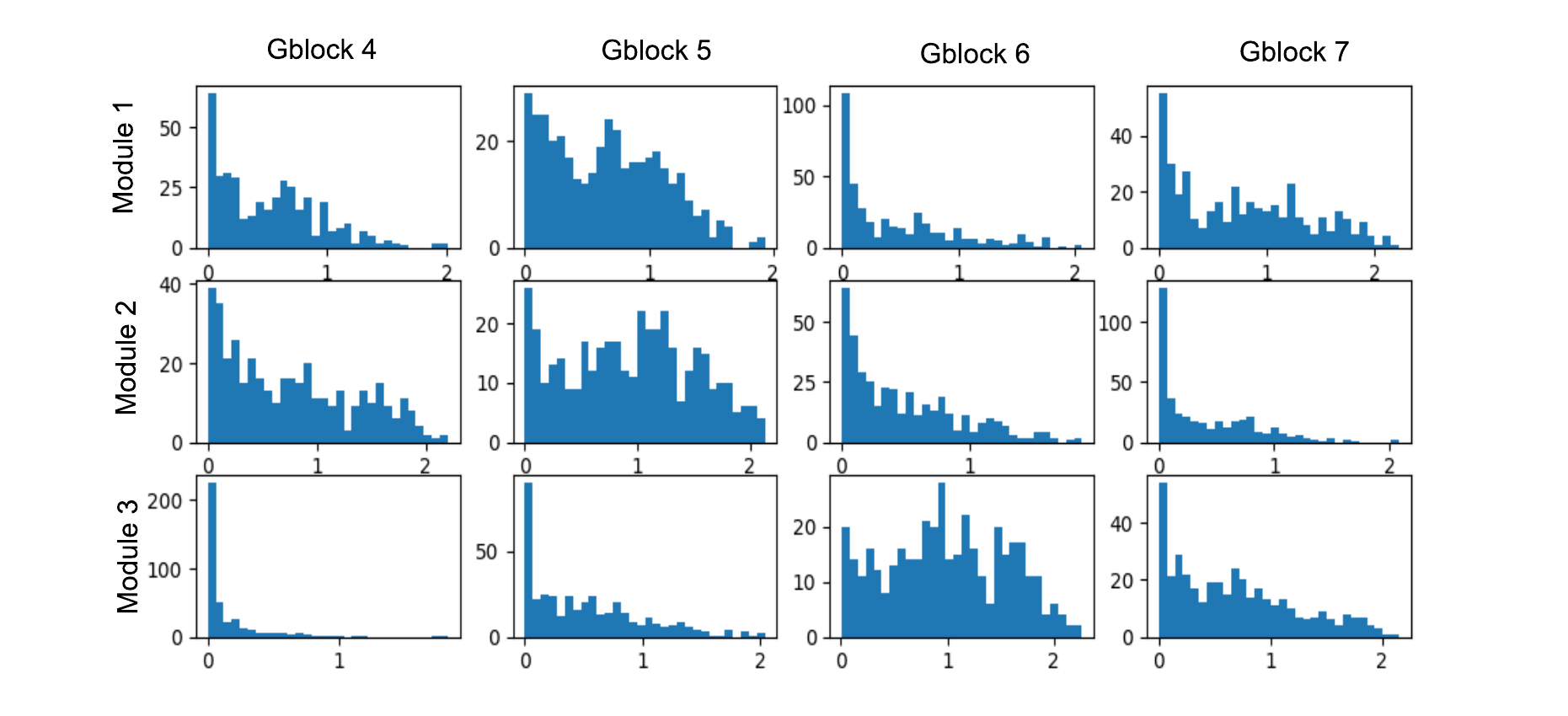}
	\caption{Histogram of the entropy of the resnet\_v2\_50 classifier logits corresponding to the experiment of Fig.~\ref{fig:extraBIGGANExples}. Columns indicate intervened Gblock (from 4 to 7), rows indicate module (as ordered in Fig.~\ref{fig:extraBIGGANExples}). The entropy is computed using the probabilistic output for the 10 classes receiving top ranking across all hybrids, normalized to provide a total probability of 1. In particular for Gblock 6 (left column) we can see that the module with poorer quality leads to larger entropy values. Interestingly, entropy values are also much smaller for hybrids based on interventions on the (more abstract level) Gblock number 4. Overall, the results suggests that object texture, which is well rendered in hybrids generated from Gblock 4, is a key information for the classifier's decision. \label{fig:entropyBIGGAN}}
\end{figure*}

\begin{figure*}[h]
	\centering
\vspace*{-1.5cm}
	\makebox[\textwidth][c]{\includegraphics[width=1.5\textwidth]{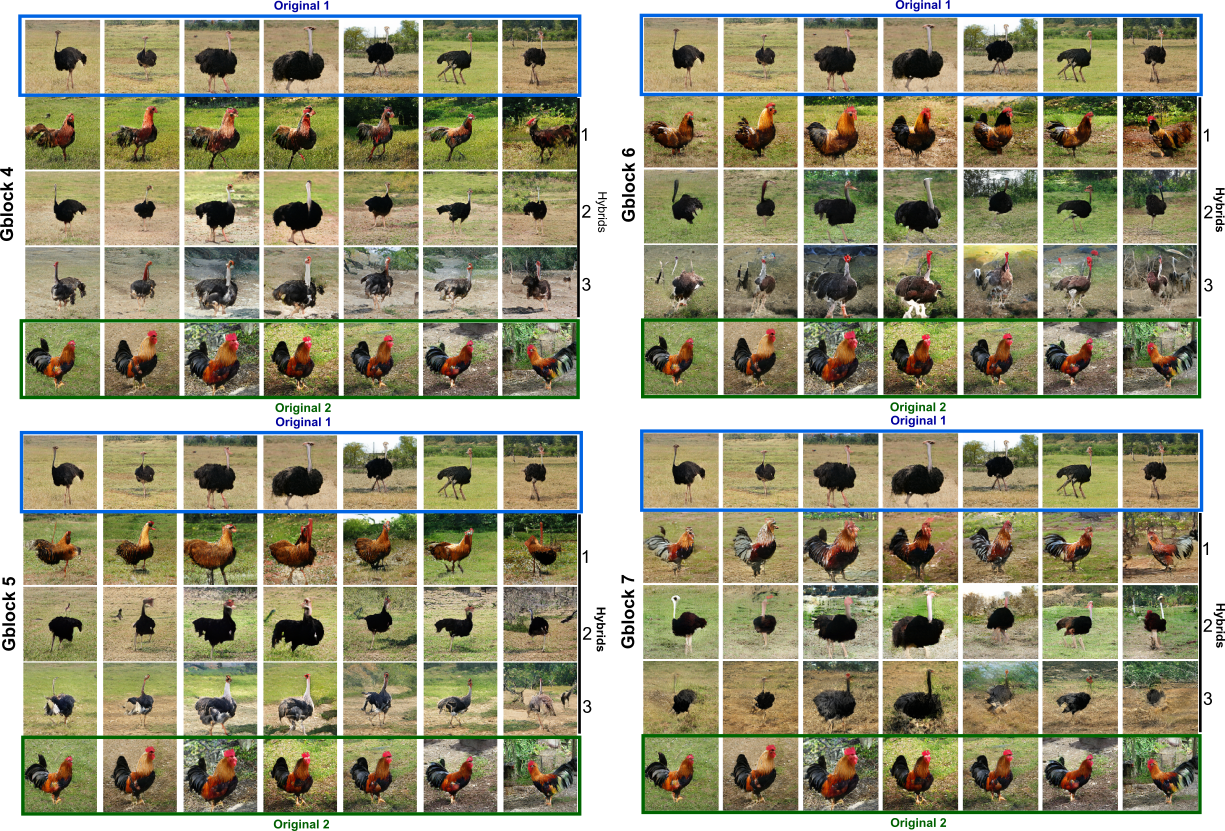}}
	\caption{Larger collection of hybrids for the BIGAN, between classes "cock" and "ostrich". Each panel corresponds to intervening on a different Gblock (from 4 to 7). Modules are fixed and extracted by the NMF algorithm using 3 clusters. Each row of hybrids corresponds to interventions on one of the extracted module, numbered on the right-hand side. \label{fig:extraBIGGANExplesOstRoost}}
\end{figure*}

\begin{figure*}[h]
	\centering
	\includegraphics[width=.8\textwidth]{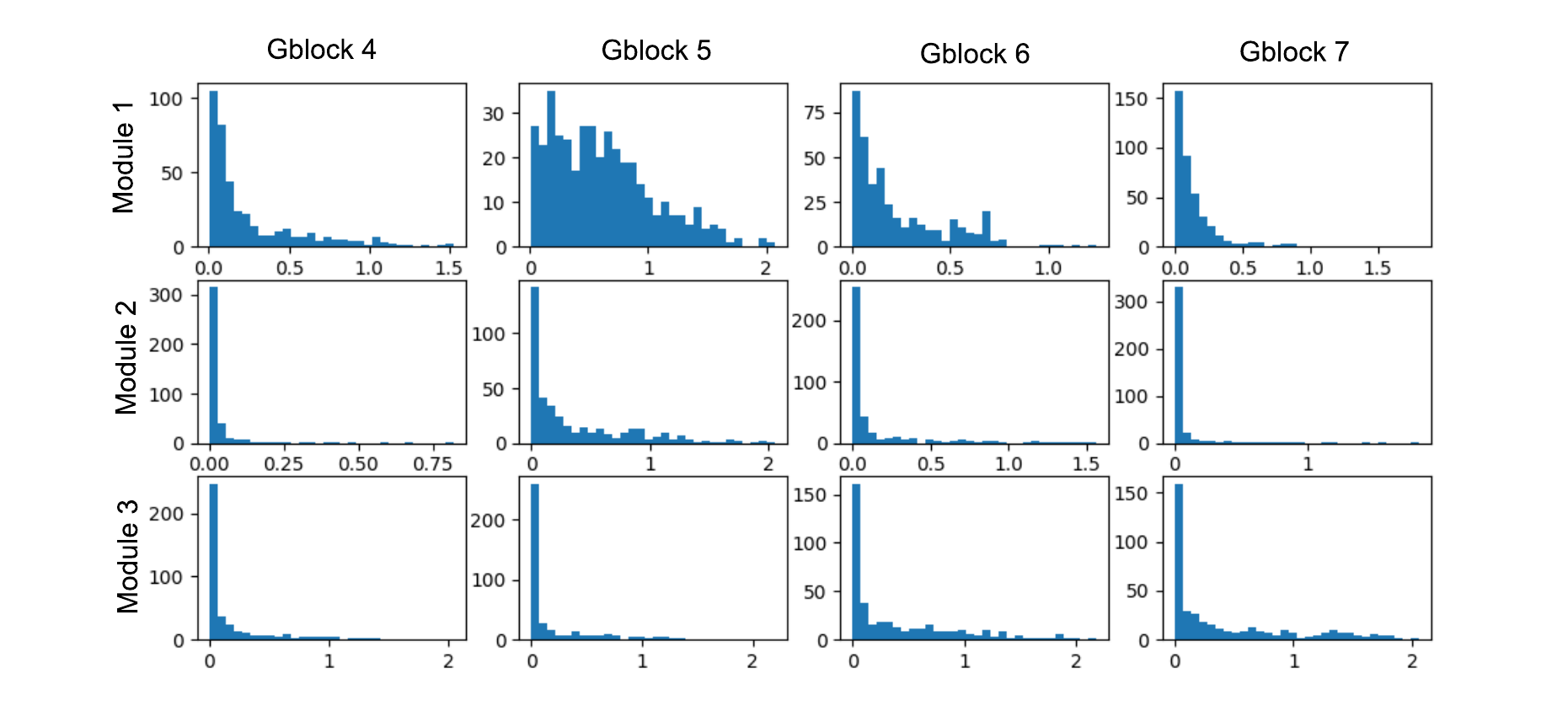}
	\caption{Histogram of the entropy of the resnet\_v2\_50 classifier logits corresponding to the experiment of Fig.~\ref{fig:extraBIGGANExplesOstRoost}. Columns indicate intervened Gblock (from 4 to 7), rows indicate module (as ordered in Fig.~\ref{fig:extraBIGGANExplesOstRoost}). The entropy is computed using the probabilistic output for the 10 classes receiving top ranking across all hybrids, normalized to provide a total probability of 1. Interestingly, large entropy is obtained for first module of Gblock 5 (middle column), consistent with the fact the intervention generates a hybrid bird, mixing shape properties of both cock and ostrich. \label{fig:entropyBIGGANOstRoost}}
\end{figure*}


%
\begin{figure}
	\noindent\begin{minipage}{\linewidth}
		\centering
		
		\begin{tabular}{c|ccccc}
			\multirow{2}{*}{Left Image}  & Model  & resnet\_v2\_50 & inception\_v3 & nasnet\_large & inception\_resnet\_v2  \\
			& Output & koala          & koala         & koala         & koala                  \\ 
			\hline
			\multirow{2}{*}{Middle Image}  & Model  & resnet\_v2\_50 & inception\_v3 & nasnet\_large & inception\_resnet\_v2  \\
			& Output & koala          & koala         & teddy         & teddy                  \\ 
			\hline
			\multirow{2}{*}{Right Image} & Model  & resnet\_v2\_50 & inception\_v3 & nasnet\_large & inception\_resnet\_v2  \\
			& Output & koala          & teddy         & teddy         & koala           
		\end{tabular}
		\captionof{table}{\label{tab:classification} The classification outcome of several discriminative models for three randomly chosen koala+teddy hybrids (see Figure.~\ref{fig:classifier_inputs}). The purpose of this experiment is to investigate the use of the proposed intervention procedure for assessing robustness of classifiers. As can be seen in the following images, the resultant hybrids are roughly a teddy bear in a koala context. An ideal classifier must be sensitive to the object present in the scene not the contextual information. A teddy bear must still be classified as a teddy bear even if it appears on a tree which is the koala environment in most of the koala images in the ImageNet dataset. It can be seen in the following table that nasnet\_large is more robust to the change of context compared to other classifiers.}
		\vspace{5mm}
		\includegraphics[width = .8\textwidth]{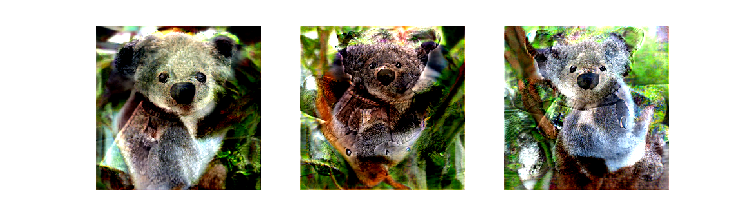}
		\captionof{figure}{\label{fig:classifier_inputs}Three koala+teddy hybrids as the inputs to the classifiers of Table.~\ref{tab:classification}}
	\end{minipage}
\end{figure}

\end{document}